\documentclass{article}

\IfFileExists{PRIMEarxiv.sty}{\usepackage{PRIMEarxiv}}{}
\usepackage[utf8]{inputenc}
\usepackage[T1]{fontenc}
\usepackage{hyperref}
\usepackage{url}
\usepackage{booktabs}
\usepackage{amsfonts}
\usepackage{amsmath}
\usepackage{amsthm}
\usepackage{amssymb}
\usepackage{xcolor}
\usepackage{multirow}
\usepackage{graphicx}
\usepackage{fancyhdr}
\usepackage{array}
\usepackage{caption}
\graphicspath{{media/}{figure/}}

\newtheorem{theorem}{Theorem}

\newtheorem{definition}{Definition}

\providecommand{\keywords}[1]{\par\bigskip\noindent\textbf{Keywords: } #1}

\title{A Polynomial Architecture–Attribution Co-Design Framework for Exact Aumann–Shapley Attribution in GNNs}

\author{
  \textbf{
    Bizu Feng\textsuperscript{1,2,3},
    Zhimu Yang\textsuperscript{4},
    Shuming Wang\textsuperscript{2,3},
    Shaode Yu\textsuperscript{4}
  }
  \\
  \textbf{
    Yuan Cheng\textsuperscript{1,2},
    Xiaojun Qian\textsuperscript{1},
    Zixin Hu\textsuperscript{1,2,*}
  }
  \\[0.7em]
  {\small
    \textsuperscript{1}Institute of Artificial Intelligence Innovation and Industry,
    Fudan University, Shanghai, China
  }
  \\
  {\small
    \textsuperscript{2}Shanghai Academy of AI for Science,
    Shanghai, China
  }
  \\
  {\small
    \textsuperscript{3}Human Phenome Institute,
    Fudan University, Shanghai, China
  }
  \\
  {\small
    \textsuperscript{4}School of Information and Communication Engineering,
    Communication University of China, Beijing, China
  }
  \\[0.5em]
  {\footnotesize
    \texttt{bzfeng25@m.fudan.edu.cn},
    \texttt{muyuzhierchengse@gmail.com}
  }
  \\
  {\footnotesize
    \texttt{smwang@m.fudan.edu.cn},
    \texttt{yushaodecuc@cuc.edu.cn}
  }
  \\
  {\footnotesize
    \texttt{cheng\_yuan@fudan.edu.cn},
    \texttt{qianxiaojun@fudan.edu.cn}
  }
  \\[0.4em]
  {\small
  \textsuperscript{*}Correspondence:
  \texttt{huzixin@fudan.edu.cn}
  }
}

\begin{document}
\maketitle

\begin{abstract}
We study feature-level and node-level explanations for graph neural networks (GNNs) through the lens of Aumann–Shapley attribution. Path-integral methods such as Integrated Gradients provide an axiomatic formulation of attribution, but their practical use in deep GNNs typically relies on finite-sample numerical approximations to the path integral, requiring a trade-off between quadrature error and computational cost. This paper proposes APEX, a model-attribution co-design framework that makes the attribution integral exactly computable under a polynomial GNN architecture. The key component is PolyGIN, a GIN-style graph network whose message-passing, normalization, and transformation operations preserve a bounded multivariate polynomial form for scalar model scores, such as pre-softmax logits. We show that, for a PolyGIN with $L$ polynomial transformation blocks, the derivative along the attribution path has degree at most $2^L-1$. Therefore, Gauss--Legendre quadrature can evaluate the Aumann--Shapley path integral exactly, up to floating-point precision, with $2^{L-1}$ deterministic evaluation points. The resulting attributions can be computed at the feature level and then aggregated into node-level scores while preserving completeness. Experiments on synthetic and real-world graph benchmarks show that PolyGIN maintains competitive predictive performance, while the complete APEX framework achieves higher attribution fidelity than the compared baselines and substantially reduces the number of evaluations required for path integration.
\end{abstract}

\keywords{Graph neural networks \and Explainability \and Aumann–Shapley attribution \and Integrated Gradients \and Exact attribution}

\section{Introduction}

Graph neural networks (GNNs) have become a standard modeling tool for graph-structured data, with applications in social networks, recommendation systems\cite{wu2022graph, fan2019graph}, natural language processing\cite{yao2019graph}, and molecular property prediction \cite{jiang2021could}. Their success largely comes from iterative message passing\cite{gilmer2017neural}, which allows node representations to combine local features with multi-hop neighborhood information. However, the same mechanism also makes their predictions difficult to interpret: after several rounds of nonlinear aggregation, it is often unclear which input features, nodes, or substructures are responsible for a model output. This lack of transparency limits the use of GNNs in settings where reliable explanations are needed.

A common way to explain neural predictions is to assign attributions to input variables. Among attribution methods, Shapley-style values are attractive because they are connected to explicit axioms of fair contribution allocation. In differentiable models, continuous contribution allocation can be formulated through path methods related to the Aumann–Shapley construction. Integrated Gradients instantiates such attribution along the straight-line path from a baseline to the input\cite{sundararajan2017axiomatic, aumann2015values, sundararajan2020many}. These methods integrate the gradient of the model along a path from a baseline input to the target input. In principle, this provides a rigorous attribution rule. In practice, however, the integral is rarely available in closed form for standard GNNs, whose nonlinear activations and normalization operations make the integrand analytically difficult to handle. Figure~\ref{fig:motivation} illustrates this limitation and the central motivation of APEX: conventional path attribution relies on a user-specified sampling resolution, whereas APEX uses an architecture-induced degree bound to determine a finite quadrature budget that eliminates truncation error under the stated conditions.Existing implementations therefore rely on numerical approximations such as Riemann sums. With few integration points, the attribution may suffer from non-negligible quadrature error; with many points, the computation becomes expensive because each point requires model evaluation and gradient computation.

This paper addresses the above limitation from a model-attribution co-design perspective. Rather than applying a numerical attribution method to an arbitrary nonlinear GNN, we ask whether the GNN architecture can be designed so that the attribution integral becomes exactly computable. Our answer is APEX, a framework built on a polynomial GNN architecture called PolyGIN. PolyGIN replaces non-polynomial components, such as ReLU activations and standard normalization layers, with polynomial or linear operations. As a result, the model output remains a multivariate polynomial in the input node features. This algebraic structure gives a finite degree bound for the derivative along the Aumann--Shapley integration path. Once this degree bound is known, Gauss--Legendre quadrature provides an exact finite-sum evaluation of the path integral. Throughout this paper, APEX refers to the complete co-designed framework consisting of the PolyGIN predictive architecture and its architecture-certified exact attribution procedure, rather than to one aspect alone.

\begin{figure}[t]
    \centering
    \IfFileExists{figure/motivation_v2.pdf}{%
        \includegraphics[width=0.9\linewidth]{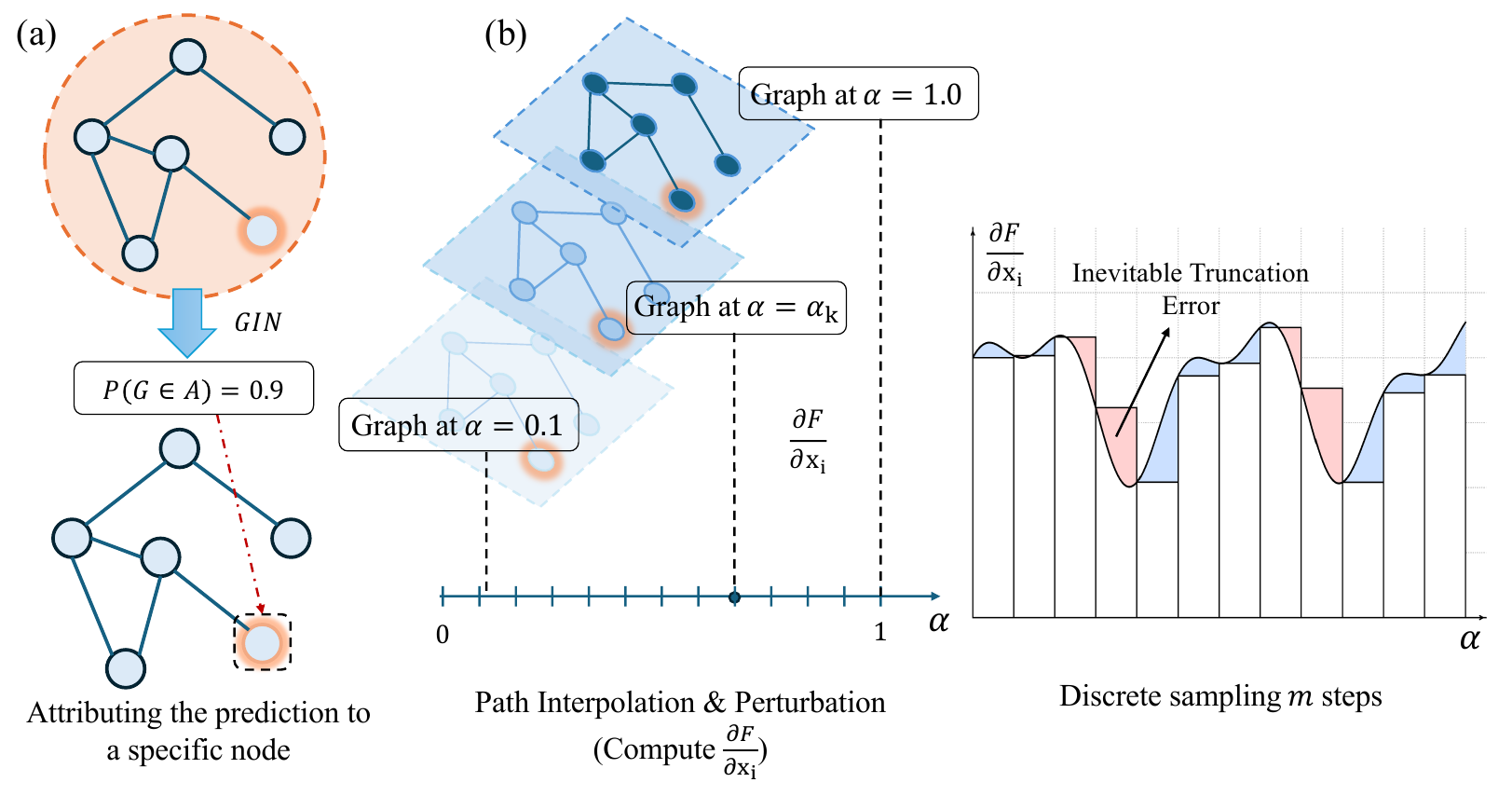}%
    }{%
        \fbox{\parbox{0.74\linewidth}{\centering Placeholder for the motivation figure. The final figure should compare numerical path integration with the exact quadrature used by APEX.}}%
    }
    \caption{Motivation of APEX. Standard path-integral attribution methods approximate the integral with numerical sampling, which introduces a trade-off between quadrature error and computation. Under the proposed polynomial architecture, the integrand has a known degree bound, allowing Gauss--Legendre quadrature to evaluate the integral exactly with a fixed number of points.}
    \label{fig:motivation}
\end{figure}

The main idea is that if the model output is a polynomial of degree at most $2^L$ in the input features, then the partial derivative involved in the attribution integral has degree at most $2^L-1$ along the straight-line path from baseline to input. Since an $m$-point Gauss--Legendre rule integrates all polynomials of degree at most $2m-1$ exactly, setting $m=2^{L-1}$ is sufficient to remove quadrature error under the proposed architecture. APEX therefore computes Aumann--Shapley attributions using a fixed, deterministic set of evaluation points rather than heuristic sampling. With respect to the numerical integration resolution, conventional numerical path integration requires $\Theta(N_{\mathrm{step}})$ gradient evaluations, whereas APEX uses an architecture-determined budget of exactly $2^{L-1}$ deterministic evaluations. Therefore, for a fixed-depth architecture, the attribution budget is independent of the user-specified integration resolution $N_{\mathrm{step}}$.

Our contributions are as follows:
\begin{itemize}
    \item We propose PolyGIN, a polynomial variant of the Graph Isomorphism Network that preserves a bounded multivariate polynomial mapping from input node features to model outputs.
    \item We prove a degree bound for PolyGIN and show that the derivative along the Aumann--Shapley path has degree at most $2^L-1$, where $L$ is the number of polynomial transformation blocks.
    \item We propose APEX, a GNN-specific architecture--attribution
    co-design framework that transforms GNN path attribution from a sampling-resolution-dependent numerical approximation problem into a finite exact computation. PolyGIN guarantees that the path integrand has degree at most $2^L-1$, which yields a sufficient $2^{L-1}$ point Gauss--Legendre rule for exact integration. APEX therefore eliminates quadrature truncation error under exact arithmetic, without user-specified integration resolutions or empirical convergence sweeps, whereas conventional numerical path integration requires $\Theta(N_{\mathrm{step}})$ gradient evaluations with accuracy dependent on the chosen $N_{\mathrm{step}}$.
    \item We extend feature-level attributions to node-level explanations through additive aggregation while preserving completeness, and evaluate the resulting explanations on five graph benchmarks in terms of predictive performance, fidelity, efficiency , and exactness.
\end{itemize}

\section{Related Work}
\label{sec:related}

\subsection{Graph Neural Network Explanations and Axiomatic Path Attribution}
Research on GNN explainability spans multiple paradigms, ranging from the post-hoc identification of influential graph components to the axiomatic attribution of model outputs to input variables\cite{yuan2022explainability}. One major line of post-hoc research focuses on identifying graph components that are influential to a trained model’s prediction. GNNExplainer learns instance-specific masks over graph structure and node features to extract a compact explanatory subgraph\cite{ying2019gnnexplainer}, whereas PGExplainer amortizes this process by learning a parameterized distribution over explanatory edges that can generalize across instances\cite{luo2020parameterized}. SubgraphX formulates explanation as a search over candidate subgraphs and uses Monte Carlo tree search together with approximate Shapley-value evaluations to identify predictive structures\cite{yuan2021explainability}. FlowX changes the explanatory unit from individual nodes or edges to message flows across GNN layers and estimates their relevance through a sampling- and learning-based procedure\cite{gui2023flowx}. Gradient-based techniques, including saliency and Grad-CAM-style methods adapted to graph models, instead obtain node- or feature-level relevance from local derivatives and intermediate representations\cite{pope2019explainability}. These methods provide useful mechanisms for selecting explanatory structures or ranking graph components, but their scores are defined through method-specific masking objectives, combinatorial search, learned explainers, or local sensitivity measures. However, these methods are generally not designed to satisfy the completeness constraint required by axiomatic path attribution and therefore do not provide a corresponding completeness guarantee.

Shapley-style attribution provides a complementary perspective by defining feature contributions through explicit allocation principles. The Shapley value characterizes contribution allocation in finite cooperative games\cite{shapley1953value}, while the Aumann--Shapley construction extends related principles to continuous settings\cite{aumann2015values}. For differentiable models, path attribution methods distribute the output difference between a reference input and a target input by integrating directional derivatives along a connecting path. Integrated Gradients instantiates this construction using the straight-line path from a baseline to the input and satisfies properties including sensitivity and implementation invariance\cite{sundararajan2017axiomatic}. When the path integral is evaluated exactly, the resulting feature attributions also satisfy completeness: their sum equals the difference between the model scores at the input and the baseline.

For standard neural networks, however, the path integrand generally lacks an analytically usable form. Practical implementations therefore replace the integral with a finite numerical rule, commonly using uniformly spaced samples or related composite formulas. The resulting attribution depends not only on the model and baseline but also on the selected integration budget. Increasing this budget reduces numerical error at the cost of additional forward and backward evaluations, whereas an insufficient budget can leave a nonzero completeness gap and alter the numerical values used for feature or node ranking. This issue is distinct from the combinatorial approximation used by Shapley-based graph explainers. For example, SubgraphX approximates coalition-based contributions of subgraphs, and FlowX estimates contributions associated with message flows\cite{yuan2021explainability,gui2023flowx}. APEX instead considers continuous Aumann--Shapley attribution of input features and derives node-level explanations by additive aggregation. Its objective is not to introduce another sampling strategy, but to construct a model class for which the continuous path integral has a known finite exact evaluation budget.

\subsection{Architectural Conditions for Exact Path Attribution}

Most attribution methods treat the predictive model as fixed and subsequently approximate the explanation required for that model. A smaller line of research has shown that architectural structure can make axiomatic attribution substantially more efficient. Hesse et al.\ prove that bias-free, nonnegatively homogeneous neural networks admit an axiomatic attribution that can be evaluated with a single forward and backward pass\cite{hesse2021fast}. Their result exploits the scaling identity of positively homogeneous functions and is naturally associated with attribution from the zero reference. It provides an important example of model--attribution compatibility, but the reduction does not directly extend to general baselines or to nonhomogeneous transformations containing additive biases and higher-order interactions.

APEX follows the same broad principle that model structure can simplify attribution, but relies on a different algebraic condition. Rather than requiring the path integral to collapse to a single endpoint-gradient computation, APEX constrains the scalar input-to-logit mapping to have bounded polynomial degree. Along an arbitrary straight-line path, the corresponding derivative is then a univariate polynomial whose degree is determined by the architecture. This degree certificate yields a sufficient Gauss--Legendre evaluation budget without selecting a sampling resolution empirically or specifying an error tolerance. The resulting guarantee covers nonhomogeneous polynomial interactions and general reference inputs, while imposing its own explicit restriction: every operation affecting the attributed scalar score must preserve polynomial dependence on the input features.

\subsection{Polynomial Architectures for Exact Attribution}

Polynomial neural networks have previously been studied as expressive model classes rather than as mechanisms for exact attribution. Kileel et al.\ characterize the function spaces induced by deep networks with polynomial activations and analyze how architectural choices affect their expressive power\cite{kileel2019expressive}. $\Pi$-Nets construct high-order input polynomials through structured skip connections and tensorized parameterizations, demonstrating that polynomial architectures can represent nontrivial higher-order interactions rather than reducing to linear models\cite{chrysos2020p}. These results motivate polynomial networks as viable predictive architectures, but they do not use the degree of the input-to-output map to determine an exact path-attribution computation. In APEX, the polynomial restriction is introduced for this specific purpose: the architecture exposes a finite degree bound, and the attribution algorithm uses that bound to select a deterministic quadrature budget.

This use of polynomial structure should also be distinguished from polynomial graph filters. Spectral architectures such as Chebyshev graph convolution express the propagation operator as a polynomial in the graph Laplacian\cite{defferrard2016convolutional}. Polynomial dependence on a graph operator does not imply that the complete network is polynomial in its input node features, since subsequent activations, normalization layers, attention mechanisms, or output transformations may remain non-polynomial. APEX instead requires the complete scalar score used for attribution, such as a pre-softmax logit, to be a bounded-degree polynomial of the attributed inputs. PolyGIN realizes this condition while retaining GIN-style neighborhood aggregation\cite{xu2018how}. The central distinction from prior polynomial architectures and graph filters is therefore not the use of polynomials itself, but the architecture--attribution co-design that converts a model-level degree bound into an exact and automatically determined Aumann--Shapley evaluation budget.

\section{Methodology}
\label{sec:methodology}

\begin{figure}[h]
    \centering
    \IfFileExists{figure/workflow_v2.pdf}{%
        \includegraphics[width=1.0\linewidth]{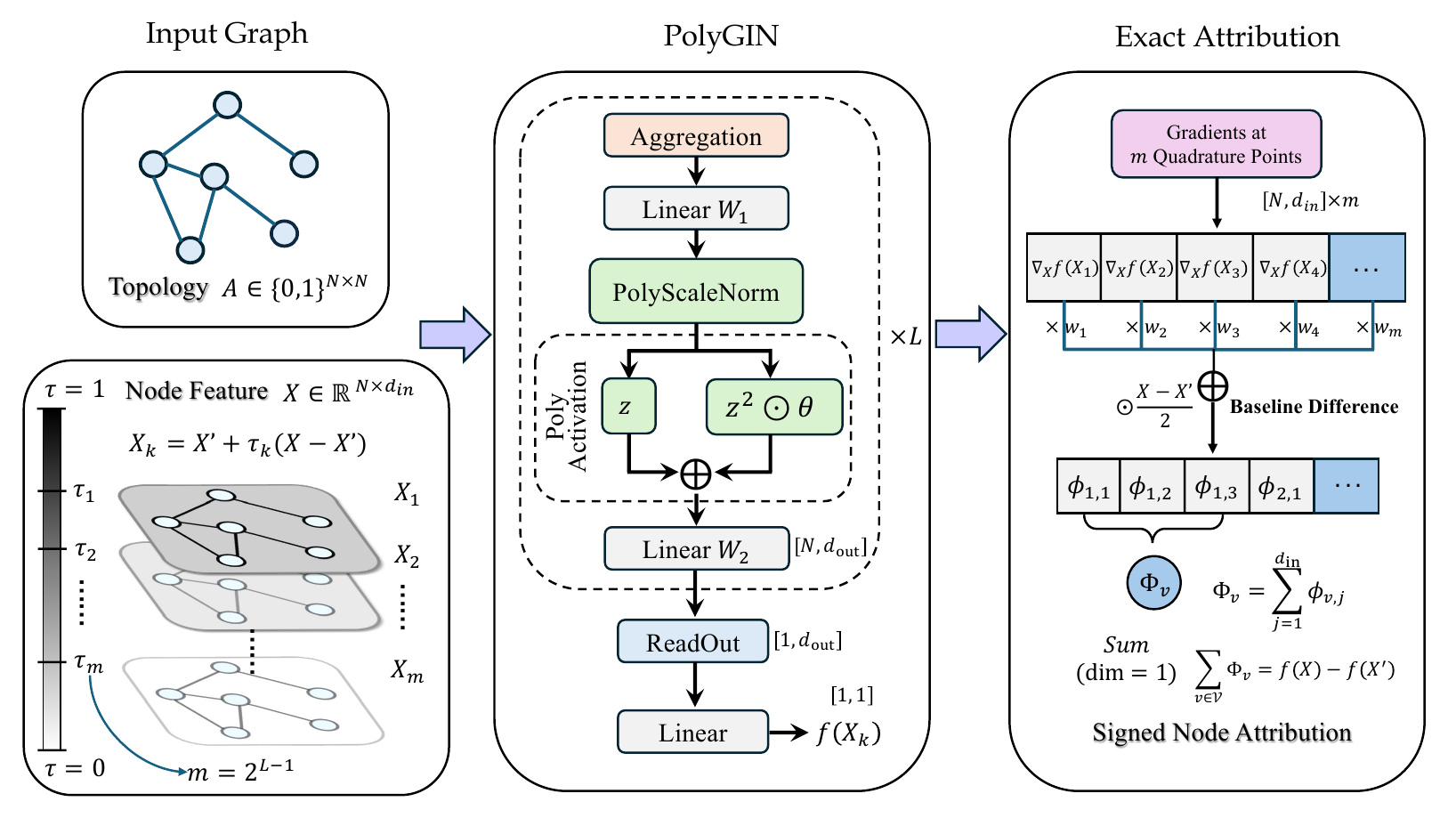}%
    }{%
        \fbox{\parbox{0.78\linewidth}{\centering Placeholder for the method figure. The final figure should illustrate PolyGIN, the polynomial-degree bound, and the Gauss--Legendre evaluation points used by APEX.}}%
    }
    \caption{Overview of the APEX computation. PolyGIN constrains the model output to a bounded polynomial space. This gives a degree bound for the derivative along the attribution path, allowing APEX to replace numerical path sampling with exact Gauss--Legendre quadrature.}
    \label{fig:overview}
\end{figure}

This section presents the APEX framework. As summarized in Figure~\ref{fig:overview}, PolyGIN constrains the input-to-logit mapping to a bounded polynomial space, whose degree bound determines a sufficient finite evaluation budget for exact Gauss--Legendre quadrature. The resulting feature-level attributions are then additively aggregated into signed node-level explanations while preserving completeness. We first formulate the attribution problem, then define the PolyGIN architecture, establish its degree bound, derive the exact quadrature rule, and present the node-level aggregation.

\subsection{Problem Formulation}
Let $f$ be a trained graph model and let $X\in\mathbb{R}^{N\times d}$ be the input node-feature matrix. For simplicity, we write the flattened input features as $x_1,\ldots,x_n$, where $n=Nd$. In classification tasks, $f$ denotes the scalar score of the target class, implemented as the pre-softmax logit rather than the post-softmax probability. This distinction is important because a softmax layer would introduce non-polynomial operations and would therefore fall outside the exactness guarantee of APEX.

Given a baseline input $X'$, the straight-line attribution path is
\begin{equation}
    \gamma(\tau)=X' + \tau(X-X'), \qquad \tau\in[0,1].
\end{equation}
For a target output $f(X)$, the Aumann--Shapley attribution of feature $x_i$ is
\begin{equation}
\label{eq:as}
    \phi_i = (x_i-x_i')\int_0^1 \frac{\partial f(\gamma(\tau))}{\partial x_i}\,d\tau.
\end{equation}
This is the straight-line path attribution used by Integrated Gradients \cite{sundararajan2017axiomatic}. The main computational challenge is evaluating the integral in Eq.~\eqref{eq:as}. Standard IG approximates it with a finite numerical sum. APEX instead designs $f$ so that the integrand is a bounded-degree polynomial in $\tau$, allowing exact quadrature.

\subsection{PolyGIN: A Polynomial Graph Architecture}
PolyGIN follows the neighborhood aggregation structure of GIN but replaces the nonlinear transformation module with polynomial and linear operations. This ensures that the model output remains a multivariate polynomial in the input node features.

\begin{definition}[PolyActivation]
Given a vector $z$, the polynomial activation is
\begin{equation}
    \sigma_{\mathrm{poly}}(z)=z+\theta\odot z^2,
\end{equation}
where $\theta$ is a learnable parameter vector, $\odot$ denotes element-wise multiplication, and $z^2$ is applied element-wise.
\end{definition}

The linear term preserves an identity path, while the quadratic term introduces nonlinear feature interactions. Since the operation is polynomial, it preserves algebraic tractability.

\begin{definition}[PolyScaleNorm]
Given a vector $z$, PolyScaleNorm is defined as
\begin{equation}
    \mathrm{Norm}_{\mathrm{poly}}(z)=s\odot z,
\end{equation}
where $s$ is a learnable dimension-wise scaling vector.
\end{definition}

A PolyGIN layer first performs the injective sum-aggregation form introduced by GIN \cite{xu2018how}, unlike normalization layers that divide by data-dependent statistics, PolyScaleNorm is a linear scaling operation and therefore does not destroy the polynomial form.

\begin{definition}[Polynomial GIN Layer]
Let $h_v^{(l)}$ denote the representation of node $v$ at layer $l$. A PolyGIN layer first performs GIN-style aggregation,
\begin{equation}
    a_v^{(l)} = (1+\epsilon^{(l)})h_v^{(l-1)} + \sum_{u\in\mathcal{N}(v)} h_u^{(l-1)},
\end{equation}
and then applies a polynomial transformation,
\begin{equation}
    h_v^{(l)} = W_2^{(l)}\,\sigma_{\mathrm{poly}}\left(\mathrm{Norm}_{\mathrm{poly}}\left(W_1^{(l)}a_v^{(l)}\right)\right),
\end{equation}
where $W_1^{(l)}$ and $W_2^{(l)}$ are learnable linear maps.
\end{definition}

Because aggregation, linear projection, and PolyScaleNorm are linear, and PolyActivation is polynomial, each PolyGIN layer maps polynomial representations to polynomial representations.

\subsection{Degree Bound of PolyGIN}
\label{sec:degree_bound}
The exactness of APEX relies on knowing the degree of the attribution integrand. The following theorem gives the required bound.

\begin{theorem}[Degree Bound]
\label{thm:degree}
Consider a PolyGIN network with $L$ polynomial transformation blocks, each containing one PolyActivation. If the network output is denoted by $y=f^{(L)}(X)$, then $y$ is a multivariate polynomial in the input features $X$ with degree at most $2^L$. Consequently, for any input feature $x_i$, the partial derivative $\partial y/\partial x_i$ has degree at most $2^L-1$.
\end{theorem}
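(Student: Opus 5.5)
We argue by induction on the number of polynomial transformation blocks, tracking the maximal total degree of every coordinate of the node representations as polynomials in the flattened inputs $x_1,\ldots,x_n$. Write $D_l$ for the maximum total degree over all nodes $v$ and coordinates of $h_v^{(l)}$. The base case is $h_v^{(0)}$, the raw feature row of node $v$, whose coordinates are themselves input variables; hence $D_0\le 1$. The claim to establish is $D_l\le 2^l$.

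For the inductive step, assume every coordinate of every $h_u^{(l-1)}$ is a polynomial of degree at most $2^{l-1}$. First, the aggregation $a_v^{(l)}$ is a linear combination, with fixed scalar coefficients $(1+\epsilon^{(l)})$ and $1$, of the vectors $h_u^{(l-1)}$ for $u\in\{v\}\cup\mathcal{N}(v)$. The graph is fixed and is not attributed, so the neighbourhood sum is a constant-coefficient linear operation, and the degree stays at most $2^{l-1}$. Next, $W_1^{(l)}a_v^{(l)}$ is linear, and $\mathrm{Norm}_{\mathrm{poly}}$ is multiplication by a constant vector $s$. Any bias terms, if present, add degree-$0$ polynomials, so the degree still does not increase. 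Then $\sigma_{\mathrm{poly}}(z)=z+\theta\odot z^2$ acts coordinatewise. Since $\deg(p^2)=2\deg p$ and degree is subadditive under sums, each output coordinate has degree at most $\max(2^{l-1},2\cdot 2^{l-1})=2^l$. Finally, $W_2^{(l)}$ is linear, which gives $D_l\le 2^l$.

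After $L$ blocks, any readout producing the scalar score must also be degree-nonincreasing. This includes a sum or mean pooling over nodes with a fixed graph size, a node selection, or a final linear layer to the pre-softmax logit. All of these are linear, so $y$ has degree at most $2^L$. Here I would state explicitly the standing assumption from the Problem Formulation: every operation after the last PolyActivation that affects $y$ is affine in its input. Softmax and any other non-polynomial readout are excluded, because this is exactly what makes "$y=f^{(L)}(X)$" well-defined as a polynomial.

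For the derivative, differentiating a monomial $\prod_j x_j^{k_j}$ with respect to $x_i$ either annihilates it or produces a monomial of total degree reduced by exactly one. Hence $\partial y/\partial x_i$ is a polynomial of degree at most $2^L-1$. Restricting to the path $\gamma(\tau)$, each $x_j$ becomes affine in $\tau$, so the integrand in Eq.~\eqref{eq:as} is a univariate polynomial in $\tau$ of degree at most $2^L-1$. This is the form used later for the quadrature argument.

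The step that needs the most care is the linearity bookkeeping rather than the squaring itself. Two points must be made explicit: the adjacency structure, $\epsilon^{(l)}$, $s$, $\theta$ and the weights are constants, not functions of $X$; and any normalization or pooling involves no data-dependent division. Once these are stated, the induction is routine.
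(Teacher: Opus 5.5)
Your proposal is correct and follows the paper's proof essentially step for step. Both run an induction from $\deg(h^{(0)})=1$, note that the linear aggregation, projections and PolyScaleNorm do not raise the degree, that PolyActivation at most doubles it to $2^l$, and that differentiation lowers each nonconstant monomial's degree by one. Your extra bookkeeping (constant graph structure and parameters, possible bias terms, and an affine readout to the pre-softmax logit) spells out assumptions that the paper's proof leaves implicit, which makes the argument more complete without changing its route.
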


\begin{proof}
Let $\deg(\cdot)$ denote the maximum polynomial degree with respect to the original input features. At the input layer, $h^{(0)}=X$, so $\deg(h^{(0)})=1$. Suppose that $h^{(l-1)}$ has degree at most $2^{l-1}$. The aggregation step, linear projections, and PolyScaleNorm are all linear transformations, so they do not increase the degree. PolyActivation maps a representation $z$ to $z+\theta\odot z^2$. Therefore,
\begin{equation}
    \deg\left(z+\theta\odot z^2\right)
    \leq \max\{\deg(z),2\deg(z)\}
    \leq 2^l.
\end{equation}
By induction, the output after $L$ polynomial transformation blocks has degree at most $2^L$. Taking a first-order partial derivative reduces the degree of each nonconstant monomial by one, so $\partial y/\partial x_i$ has degree at most $2^L-1$.
\end{proof}

The bound is stated as an upper bound because degenerate parameter choices may reduce the realized degree. The upper bound is sufficient for choosing an exact quadrature rule.

\paragraph{Remark on numerical stability.}
The degree bound grows exponentially with the number of polynomial transformation blocks, and repeated quadratic transformations may lead to magnitude growth in intermediate representations. PolyGIN inserts PolyScaleNorm between the linear projection and the polynomial activation. Since PolyScaleNorm is a learnable diagonal scaling operation, it provides learnable rescaling of intermediate representations and helps mitigate magnitude growth in practice, without introducing non-polynomial terms or altering the degree bound. This mechanism does not, by itself, guarantee numerical stability for arbitrarily deep polynomial networks. Accordingly, we use a moderate number of polynomial transformation blocks in the experiments considered in this paper.

\subsection{Exact Gauss--Legendre Quadrature for Aumann--Shapley Attribution}
\label{sec:quadrature}
For a fixed feature $x_i$, define the attribution integrand
\begin{equation}
    g_i(\tau)=\frac{\partial f(\gamma(\tau))}{\partial x_i}.
\end{equation}
Since $\gamma(\tau)$ is affine in $\tau$ and $\partial f/\partial x_i$ is a polynomial of degree at most $2^L-1$ in the input features, $g_i(\tau)$ is a univariate polynomial in $\tau$ of degree at most $2^L-1$.

Gauss--Legendre quadrature states that an $m$-point Gauss–Legendre rule integrates every polynomial of degree at most $2m-1$ exactly on $[-1,1]$ \cite{golub1969calculation, lee2009robust}. Therefore, choosing
\begin{equation}
    m=2^{L-1}
\end{equation}
is sufficient for exact integration of $g_i$. Let $\{t_k,w_k\}_{k=1}^m$ be the standard Gauss--Legendre nodes and weights on $[-1,1]$. After mapping the interval $[-1,1]$ to $[0,1]$ by $\tau_k=(t_k+1)/2$, Eq.~\eqref{eq:as} becomes
\begin{equation}
\label{eq:apex_exact}
    \phi_i = \frac{x_i-x_i'}{2}\sum_{k=1}^{2^{L-1}} w_k\,
    \frac{\partial f\left(X' + \frac{t_k+1}{2}(X-X')\right)}{\partial x_i}.
\end{equation}
Under the PolyGIN degree bound, Eq.~\eqref{eq:apex_exact} evaluates the Aumann--Shapley integral exactly up to floating-point numerical precision.

\subsection{Node-level Attribution by Additive Aggregation}
\label{sec:rollup}
APEX first computes feature-level attributions. For graph interpretation, it is often useful to aggregate these feature scores into node-level scores. Let $\mathcal{F}_v$ denote the set of feature indices associated with node $v$. The signed node-level attribution is
\begin{equation}
\label{eq:node_rollup}
    \Phi_v = \sum_{i\in\mathcal{F}_v}\phi_i.
\end{equation}
This aggregation preserves the completeness of the feature-level attribution because summation over nodes is equivalent to summation over all input features:
\begin{equation}
    \sum_{v\in\mathcal{V}}\Phi_v
    = \sum_{i=1}^n \phi_i
    = f(X)-f(X').
\end{equation}

Importantly, we retain $\Phi_v$ as a signed node-level attribution rather than converting it into a nonnegative magnitude. A positive $\Phi_v$ indicates that node $v$ contributes supporting evidence to the target-class logit relative to the baseline, whereas a negative $\Phi_v$ represents opposing evidence against the target class. Under the PolyGIN degree bound, APEX eliminates quadrature truncation error; consequently, these signed node-level attributions preserve completeness up to floating-point precision and provide a numerically consistent decomposition of the prediction difference into supporting and opposing contributions. Such directional information cannot be represented by nonnegative importance scores alone.

\section{Experiments}
\label{sec:experiments}

We evaluate APEX from three perspectives. First, we test whether PolyGIN preserves predictive performance relative to a standard GIN. Second, we evaluate explanation fidelity against representative and state-of-the-art explainability baselines. Third, we measure the completeness error and runtime of the attribution computation. We evaluate APEX on five widely used graph benchmarks covering synthetic, molecular, and textual graphs. Detailed dataset statistics are provided in Appendix ~\ref{app:datasets}.

\subsection{Research Questions}
We organize the experiments around the following questions:
\begin{itemize}
    \item \textbf{RQ1: Predictive performance.} Does PolyGIN maintain competitive accuracy compared with a standard GIN?
    \item \textbf{RQ2: Attribution fidelity.} Do APEX attributions identify explanation subsets that are necessary and sufficient for preserving the model prediction, as measured by Fidelity$^+$ and Fidelity$^-$?
    \item \textbf{RQ3: Efficiency and exactness.} Does APEX reduce the completeness error and the number of required model evaluations compared with numerical integration baselines?
\end{itemize}

\subsection{Predictive Performance of PolyGIN}
\label{sec:models}
A meaningful explanation should be computed for a model with adequate predictive performance. To answer RQ1, we compare PolyGIN with a standard GIN under the same hidden dimension and batch size. All models use a hidden dimension of 300 and a batch size of 32. Dataset-specific learning rates and training epochs are reported in Table~\ref{tab:performance}. For every dataset, PolyGIN increases the number of trainable parameters by only approximately 0.12\%--0.17\% compared with GIN. Therefore, the comparison is conducted under nearly identicalmodel capacity, and the predictive performance of PolyGIN is not obtained through a substantial increase in parameter count. We use the standard train/validation/test splits for all datasets. GIN and PolyGIN use the same network depth and overall architectural configuration, differing only in the components required by the polynomial design. This matched architectural setting helps isolate the effect of the polynomial constraint on predictive performance. All models are trained using Adam with a weight decay of $5\times10^{-6}$, and the checkpoint with the best validation performance is selected for evaluation. Results are reported as the mean and standard deviation over ten independent runs with different random seeds. All experiments are conducted on a single NVIDIA GeForce RTX 4090 GPU.

\begin{table}[h]
\centering
\caption{Training configurations and predictive accuracy of GIN and PolyGIN.}
\vspace{0.5em}
\label{tab:performance}
\begin{tabular}{llcccc}
\toprule
\textbf{Dataset} & \textbf{Model} & \textbf{Learning Rate}
& \textbf{Epochs} & \textbf{Params} & \textbf{Accuracy (\%)} \\
\midrule
\multirow{2}{*}{BA-Shapes}
& GIN     & $1\times10^{-4}$ & 4000 & 546,304 & $97.29\pm0.77$ \\
& PolyGIN & $5\times10^{-5}$ & 9000 & 547,208 & $96.43\pm0.71$ \\
\midrule
\multirow{2}{*}{BBBP}
& GIN     & $1\times10^{-4}$ & 500  & 545,402 & $89.90\pm1.09$ \\
& PolyGIN & $1\times10^{-4}$ & 1000 & 546,306 & $89.85\pm0.81$ \\
\midrule
\multirow{2}{*}{BACE}
& GIN     & $5\times10^{-5}$ & 1000 & 545,402 & $83.62\pm1.19$ \\
& PolyGIN & $5\times10^{-5}$ & 1000 & 546,306 & $84.34\pm1.86$ \\
\midrule
\multirow{2}{*}{Graph-SST2}   
& GIN     & $1\times10^{-3}$ & 20 & 773,102 & $90.95\pm0.20$ \\
& PolyGIN & $1\times10^{-3}$ & 20 & 774,006 & $90.23\pm0.17$ \\
\midrule
\multirow{2}{*}{Mutagenicity}
& GIN     & $1\times10^{-4}$ & 100 & 546,902 & $81.78\pm0.40$ \\
& PolyGIN & $1\times10^{-3}$ & 800 & 547,806 & $80.98\pm0.61$ \\
\bottomrule
\end{tabular}
\end{table}

The results show that PolyGIN achieves accuracy close to the standard GIN on all five datasets. On BACE, PolyGIN obtains a higher accuracy than GIN; on BBBP, Graph-SST2, BA-Shapes, and Mutagenicity, the decrease is below one percentage point. These results suggest that the polynomial constraint does not substantially degrade predictive performance under the evaluated experimental conditions, providing a reasonable basis for the attribution experiments.

\subsection{Attribution Fidelity}
\label{sec:fidelity}
To answer RQ2, we evaluate whether the selected key factors are necessary and sufficient for the model prediction. We compare APEX with representative and widely used GNN explanation baselines, including GNNExplainer\cite{ying2019gnnexplainer}, PGExplainer\cite{luo2020parameterized}, GradCAM\cite{pope2019explainability}, FlowX\cite{gui2023flowx}, and Integrated Gradients \cite{sundararajan2017axiomatic}. We evaluate all methods on the same correctly classified test graphs and explain the model-predicted class. APEX attributes the corresponding pre-softmax logit from a zero-feature baseline and ranks nodes in descending order of their signed attribution scores. For the fidelity experiments, IG and its numerical variants use 50 integration points. Edge-based explanations are converted to node masks using the endpoints of the highest-scoring edges. Masked node features are replaced by random within-graph feature permutations while preserving the graph topology, and the resulting prediction probabilities are averaged over ten permutations. We follow the necessity–sufficiency evaluation protocol formalized in GraphFramEx \cite{amara2022graphframex} and evaluate all methods at matched sparsity levels. For an input graph $G$, let $c$ be the class predicted by the original model and let $p_c(G)$ be the predicted probability of class $c$. Given an explanation subset $S$, let $G\setminus S$ denote the graph after removing or masking the selected key factors, and let $G[S]$ denote the graph retaining only the selected key factors. We use
\begin{equation}
    \mathrm{Fidelity}^{+} = \mathbb{E}_{G}\left[p_c(G)-p_c(G\setminus S)\right]
\end{equation}
for necessary explanations, where a larger value indicates that the removed components were important to the prediction. We use
\begin{equation}
    \mathrm{Fidelity}^{-} = \mathbb{E}_{G}\left[p_c(G)-p_c(G[S])\right]
\end{equation}
for sufficient explanations, where a smaller value indicates that the retained components preserve the original prediction more effectively. As with removal-based fidelity metrics generally, these scores may be affected by distribution shift induced by graph perturbation; we therefore use them as comparative model-faithfulness measures under a shared perturbation protocol rather than as direct measures of semantic correctness \cite{zheng2024towards}.

\begin{figure*}[h]
    \centering
    \IfFileExists{figure/F+.pdf}{%
        \includegraphics[width=1.0\linewidth]{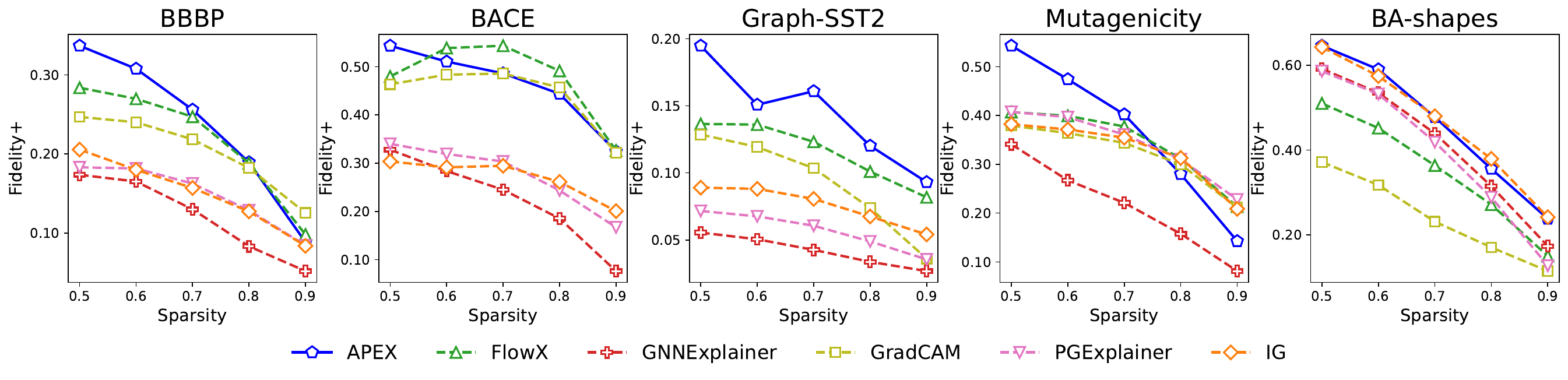}%
    }{%
        \fbox{\parbox{0.92\linewidth}{\centering Placeholder for Fidelity$^+$ results across datasets and sparsity levels.}}%
    }
    \caption{Necessary explanation comparison measured by Fidelity$^+$. Higher values indicate that the selected key factors are more necessary for preserving the original prediction.}
    \label{fig:fidelity_plus}
\end{figure*}

\paragraph{Necessary explanations.}
Figure~\ref{fig:fidelity_plus} compares Fidelity$^+$ across datasets and sparsity levels. Under these experimental conditions, APEX obtains higher Fidelity$^+$ than the compared state-of-the-art baselines across the evaluated datasets. This indicates that the key factors selected by APEX have a stronger effect on the model output when removed. These results demonstrate the framework-level effectiveness of the proposed architecture–attribution co-design under the shared evaluation protocol.

\begin{figure*}[h]
    \centering
    \IfFileExists{figure/F-.pdf}{%
        \includegraphics[width=1.0\linewidth]{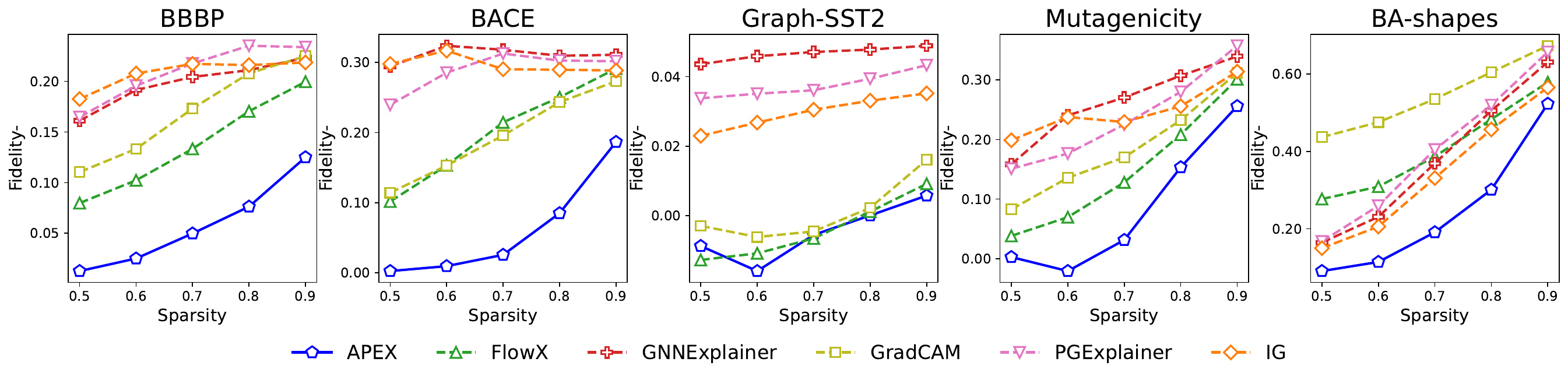}%
    }{%
        \fbox{\parbox{0.92\linewidth}{\centering Placeholder for Fidelity$^-$ results across datasets and sparsity levels.}}%
    }
    \caption{Sufficient explanation comparison measured by Fidelity$^-$. Lower values indicate that the retained key factors better preserve the original prediction.}
    \label{fig:fidelity_minus}
\end{figure*}

\paragraph{Sufficient explanations.}
Figure~\ref{fig:fidelity_minus} reports Fidelity$^-$. APEX achieves lower Fidelity$^-$ values than the compared state-of-the-art baselines under the evaluated experimental conditions, meaning that the selected key factors preserve more of the original prediction when kept alone. These results show that, under the evaluated conditions, the complete APEX framework achieves stronger sufficiency fidelity than the compared baselines.

\subsection{Efficiency and Completeness Error}
\label{sec:efficiency}
To answer RQ3, we evaluate both the computational cost and the attribution completeness error. For path-integral attribution,exact straight-line path attribution satisfies completeness by the fundamental theorem of calculus \cite{sundararajan2017axiomatic}
\begin{equation}
    \sum_{i=1}^{n}\phi_i = f(X)-f(X').
\end{equation}
For an approximate method, we measure the completeness error as
\begin{equation}
\label{eq:completeness_error}
    \Delta_{\mathrm{comp}} = \left|\sum_{i=1}^{n}\hat{\phi}_i - \big(f(X)-f(X')\big)\right|.
\end{equation}
Exact continuous Aumann–Shapley attribution rigorously satisfies completeness in theory: the sum of all feature attributions equals the change in the model output relative to the baseline, with no error introduced by the attribution allocation itself. Conventional integrated gradients methods numerically approximate the continuous path integral using a finite number of sampling points; consequently, their nonzero completeness error fundamentally arises from the quadrature error induced by path discretization. Appendix~\ref{app:truncation_error} further proves that, for integrated gradients approximated using Riemann sums, the completeness error is exactly equal to the corresponding numerical quadrature error. In contrast, APEX completely eliminates this quadrature error under exact arithmetic, thereby exactly recovering the completeness property of continuous Aumann–Shapley attribution; in practical computation, only rounding errors due to finite-precision floating-point arithmetic remain.

\begin{figure*}[t]
    \centering
    \IfFileExists{figure/efficiency_gap.pdf}{%
        \includegraphics[width=1.0\linewidth]{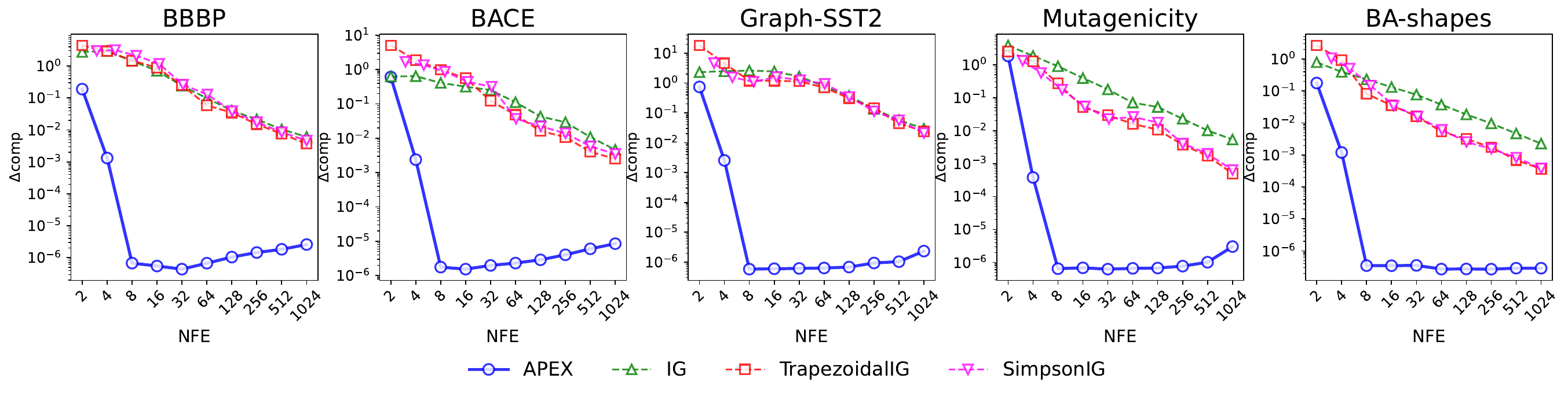}%
    }{%
        \fbox{\parbox{0.92\linewidth}{\centering Placeholder for completeness error / efficiency gap curves with respect to the number of forward evaluations.}}%
    }
    \caption{Completeness error with respect to the number of forward evaluations (NFE). Numerical integration baselines reduce the error gradually as more evaluation points are used. APEX reaches numerical precision at the number of points predicted by the PolyGIN degree bound.}
    \label{fig:efficiency_gap}
\end{figure*}

Figure~\ref{fig:efficiency_gap} compares APEX with numerical integration baselines, including standard IG, trapezoidal IG, and Simpson IG\cite{suli2003introduction}. The numerical baselines reduce completeness error as the number of evaluation points increases, but the error remains visible when the number of evaluations is limited. In contrast, for the 4-block PolyGIN used in our experiments, Theorem~\ref{thm:degree} gives an integrand degree bound of $2^4-1=15$, so $2^{4-1}=8$ Gauss--Legendre points are sufficient. With only eight evaluation points, the completeness error drops directly to the numerical precision limit of FP32, in exact agreement with the theoretical prediction. This result confirms that, under the proposed PolyGIN architecture and logit-based attribution setting, APEX does not merely reduce numerical integration error; rather, it theoretically eliminates the quadrature error induced by finite path sampling. The residual deviations are at the scale expected from finite-precision FP32 computation and are therefore consistent with floating-point roundoff rather than quadrature truncation error, rather than from approximation error in the attribution integral. Consequently, once the model depth is fixed, APEX avoids the sampling-resolution sweep required by numerical IG and achieves constant-level integration complexity with respect to $N_{\mathrm{step}}$.

\section{Conclusion}
\label{sec:conclusions}
This paper introduced APEX, a framework for exact Aumann--Shapley attribution in polynomial graph neural networks. The central idea is to co-design the model architecture and attribution algorithm: PolyGIN constrains the GNN mapping to a bounded polynomial space, and Gauss--Legendre quadrature uses the resulting degree bound to evaluate the path integral exactly with a fixed number of deterministic points. The feature-level attributions can be aggregated into node-level explanations while preserving completeness. Experiments on five graph benchmarks show that The complete APEX framework achieves higher attribution fidelity than the compared baselines, while its architecture–attribution co-design enables exact path-integral evaluation, up to floating-point precision, with a fixed and deterministic computational budget.

The exactness guarantee of APEX is conditional on the polynomial architecture and on computing attributions for scalar polynomial scores, such as logits before non-polynomial output transformations. It does not directly apply to arbitrary GNNs with ReLU activations, softmax probabilities, batch normalization, attention softmax, or other non-polynomial components. In addition, exact quadrature guarantees mathematical completeness of the path integral under the model, but it does not by itself guarantee agreement with human-interpretable ground truth in every application. Extending comparable guarantees to broader graph architectures and studying the semantic validity of the resulting explanations are important directions for future work.

\bibliographystyle{unsrt}
\bibliography{references}

@inproceedings{fan2019graph,
  title={Graph neural networks for social recommendation},
  author={Fan, Wenqi and Ma, Yao and Li, Qing and He, Yuan and Zhao, Eric and Tang, Jiliang and Yin, Dawei},
  booktitle={The world wide web conference},
  pages={417--426},
  year={2019}
}

@article{wu2022graph,
  title={Graph neural networks in recommender systems: a survey},
  author={Wu, Shiwen and Sun, Fei and Zhang, Wentao and Xie, Xu and Cui, Bin},
  journal={ACM Computing Surveys},
  volume={55},
  number={5},
  pages={1--37},
  year={2022},
  publisher={ACM New York, NY}
}

@article{jiang2021could,
  title={Could graph neural networks learn better molecular representation for drug discovery? A comparison study of descriptor-based and graph-based models},
  author={Jiang, Dejun and Wu, Zhenxing and Hsieh, Chang-Yu and Chen, Guangyong and Liao, Ben and Wang, Zhe and Shen, Chao and Cao, Dongsheng and Wu, Jian and Hou, Tingjun},
  journal={Journal of cheminformatics},
  volume={13},
  number={1},
  pages={12},
  year={2021},
  publisher={Springer}
}

@inproceedings{xu2018how,
  title={How Powerful are Graph Neural Networks?},
  author={Keyulu Xu and Weihua Hu and Jure Leskovec and Stefanie Jegelka},
  booktitle={International Conference on Learning Representations},
  year={2019},
  url={https://openreview.net/forum?id=ryGs6iA5Km},
}

@article{ying2019gnnexplainer,
  title={Gnnexplainer: Generating explanations for graph neural networks},
  author={Ying, Zhitao and Bourgeois, Dylan and You, Jiaxuan and Zitnik, Marinka and Leskovec, Jure},
  journal={Advances in neural information processing systems},
  volume={32},
  year={2019}
}

@article{luo2020parameterized,
  title={Parameterized explainer for graph neural network},
  author={Luo, Dongsheng and Cheng, Wei and Xu, Dongkuan and Yu, Wenchao and Zong, Bo and Chen, Haifeng and Zhang, Xiang},
  journal={Advances in neural information processing systems},
  volume={33},
  pages={19620--19631},
  year={2020}
}

@inproceedings{yuan2021explainability,
  title={On explainability of graph neural networks via subgraph explorations},
  author={Yuan, Hao and Yu, Haiyang and Wang, Jie and Li, Kang and Ji, Shuiwang},
  booktitle={International conference on machine learning},
  pages={12241--12252},
  year={2021},
  organization={PMLR}
}

@article{wu2018moleculenet,
  title={MoleculeNet: a benchmark for molecular machine learning},
  author={Wu, Zhenqin and Ramsundar, Bharath and Feinberg, Evan N and Gomes, Joseph and Geniesse, Caleb and Pappu, Aneesh S and Leswing, Karl and Pande, Vijay},
  journal={Chemical science},
  volume={9},
  number={2},
  pages={513--530},
  year={2018},
  publisher={Royal Society of Chemistry}
}

@article{yuan2022explainability,
  title={Explainability in graph neural networks: A taxonomic survey},
  author={Yuan, Hao and Yu, Haiyang and Gui, Shurui and Ji, Shuiwang},
  journal={IEEE transactions on pattern analysis and machine intelligence},
  volume={45},
  number={5},
  pages={5782--5799},
  year={2022},
  publisher={IEEE}
}

@inproceedings{pope2019explainability,
  title={Explainability methods for graph convolutional neural networks},
  author={Pope, Phillip E and Kolouri, Soheil and Rostami, Mohammad and Martin, Charles E and Hoffmann, Heiko},
  booktitle={Proceedings of the IEEE/CVF conference on computer vision and pattern recognition},
  pages={10772--10781},
  year={2019}
}

@inproceedings{sundararajan2017axiomatic,
  title={Axiomatic attribution for deep networks},
  author={Sundararajan, Mukund and Taly, Ankur and Yan, Qiqi},
  booktitle={International conference on machine learning},
  pages={3319--3328},
  year={2017},
  organization={PMLR}
}

@article{gui2023flowx,
  title={Flowx: Towards explainable graph neural networks via message flows},
  author={Gui, Shurui and Yuan, Hao and Wang, Jie and Lao, Qicheng and Li, Kang and Ji, Shuiwang},
  journal={IEEE Transactions on Pattern Analysis and Machine Intelligence},
  volume={46},
  number={7},
  pages={4567--4578},
  year={2023},
  publisher={IEEE}
}

@article{shapley1953value,
  title={A value for n-person games},
  author={Shapley, Lloyd S and others},
  year={1953},
  publisher={Princeton University Press Princeton}
}

@book{aumann2015values,
  title={Values of non-atomic games},
  author={Aumann, Robert J and Shapley, Lloyd S},
  year={2015},
  publisher={Princeton University Press}
}

@article{hesse2021fast,
  title={Fast axiomatic attribution for neural networks},
  author={Hesse, Robin and Schaub-Meyer, Simone and Roth, Stefan},
  journal={Advances in Neural Information Processing Systems},
  volume={34},
  pages={19513--19524},
  year={2021}
}

@article{kileel2019expressive,
  title={On the expressive power of deep polynomial neural networks},
  author={Kileel, Joe and Trager, Matthew and Bruna, Joan},
  journal={Advances in neural information processing systems},
  volume={32},
  year={2019}
}

@inproceedings{chrysos2020p,
  title={P-nets: Deep polynomial neural networks},
  author={Chrysos, Grigorios G and Moschoglou, Stylianos and Bouritsas, Giorgos and Panagakis, Yannis and Deng, Jiankang and Zafeiriou, Stefanos},
  booktitle={Proceedings of the IEEE/CVF Conference on Computer Vision and Pattern Recognition},
  pages={7325--7335},
  year={2020}
}

@article{defferrard2016convolutional,
  title={Convolutional neural networks on graphs with fast localized spectral filtering},
  author={Defferrard, Micha{\"e}l and Bresson, Xavier and Vandergheynst, Pierre},
  journal={Advances in neural information processing systems},
  volume={29},
  year={2016}
}

@inproceedings{gilmer2017neural,
  title={Neural message passing for quantum chemistry},
  author={Gilmer, Justin and Schoenholz, Samuel S and Riley, Patrick F and Vinyals, Oriol and Dahl, George E},
  booktitle={International conference on machine learning},
  pages={1263--1272},
  year={2017},
  organization={Pmlr}
}

@inproceedings{yao2019graph,
  title={Graph convolutional networks for text classification},
  author={Yao, Liang and Mao, Chengsheng and Luo, Yuan},
  booktitle={Proceedings of the AAAI conference on artificial intelligence},
  volume={33},
  number={01},
  pages={7370--7377},
  year={2019}
}

@inproceedings{sundararajan2020many,
  title={The many Shapley values for model explanation},
  author={Sundararajan, Mukund and Najmi, Amir},
  booktitle={International conference on machine learning},
  pages={9269--9278},
  year={2020},
  organization={PMLR}
}

@article{golub1969calculation,
  title={Calculation of Gauss quadrature rules},
  author={Golub, Gene H and Welsch, John H},
  journal={Mathematics of computation},
  volume={23},
  number={106},
  pages={221--230},
  year={1969}
}

@article{lee2009robust,
  title={Robust design with arbitrary distributions using Gauss-type quadrature formula},
  author={Lee, Sang Hoon and Chen, Wei and Kwak, Byung Man},
  journal={Structural and Multidisciplinary Optimization},
  volume={39},
  number={3},
  pages={227--243},
  year={2009},
  publisher={Springer}
}

@article{amara2022graphframex,
  title={Graphframex: Towards systematic evaluation of explainability methods for graph neural networks},
  author={Amara, Kenza and Ying, Rex and Zhang, Zitao and Han, Zhihao and Shan, Yinan and Brandes, Ulrik and Schemm, Sebastian and Zhang, Ce},
  journal={arXiv preprint arXiv:2206.09677},
  year={2022}
}

@inproceedings{zheng2024towards,
  title={Towards robust fidelity for evaluating explainability of graph neural networks},
  author={Zheng, Xu and Shirani, Farhad and Wang, Tianchun and Cheng, Wei and Chen, Zhuomin and Chen, Haifeng and Wei, Hua and Luo, Dongsheng},
  booktitle={International Conference on Learning Representations},
  volume={2024},
  pages={12250--12275},
  year={2024}
}

@article{kazius2005derivation,
  title={Derivation and validation of toxicophores for mutagenicity prediction},
  author={Kazius, Jeroen and McGuire, Ross and Bursi, Roberta},
  journal={Journal of medicinal chemistry},
  volume={48},
  number={1},
  pages={312--320},
  year={2005},
  publisher={ACS Publications}
}

@inproceedings{morris2019weisfeiler,
  title={Weisfeiler and leman go neural: Higher-order graph neural networks},
  author={Morris, Christopher and Ritzert, Martin and Fey, Matthias and Hamilton, William L and Lenssen, Jan Eric and Rattan, Gaurav and Grohe, Martin},
  booktitle={Proceedings of the AAAI conference on artificial intelligence},
  volume={33},
  number={01},
  pages={4602--4609},
  year={2019}
}

@book{suli2003introduction,
  title={An introduction to numerical analysis},
  author={S{\"u}li, Endre and Mayers, David F},
  year={2003},
  publisher={Cambridge university press}
}

\clearpage

\appendix

\section{Dataset Statistics}
\label{app:datasets}

We use BA-Shapes from GNNExplainer \cite{ying2019gnnexplainer}; BBBP and BACE as curated in MoleculeNet \cite{wu2018moleculenet}; Graph-SST2 from the structured NLP graph-explanation benchmark \cite{yuan2022explainability}; and the Mutagenicity dataset derived from the Ames mutagenicity collection of Kazius et al. \cite{kazius2005derivation}. We report the statistics of the five evaluated datasets in Table~\ref{tab:datasets}. The benchmarks cover synthetic node classification, molecular graph classification, and text graph classification, thereby evaluating APEX across structurally distinct graph domains.

\begin{table}[h]
\centering
\caption{Statistics of the evaluated datasets. NC and GC denote node and graph classification, respectively.}
\label{tab:datasets}
\vspace{0.5em}
\small
\setlength{\tabcolsep}{8pt}
\renewcommand{\arraystretch}{1.08}
\begin{tabular}{lcccc}
\toprule
\textbf{Dataset} & \textbf{Task} & \textbf{\# Graphs} & \textbf{Avg. \# Nodes} & \textbf{\# Classes} \\
\midrule
BA-Shapes    & NC & 1      & 700  & 4 \\
BBBP         & GC & 2,039  & 24.1 & 2 \\
BACE         & GC & 1,513  & 34.1 & 2 \\
Graph-SST2   & GC & 70,042 & 10.2 & 2 \\
Mutagenicity & GC & 4,337  & 30.3 & 2 \\
\bottomrule
\end{tabular}
\vspace{-0.8em}
\end{table}

\section{How Powerful is PolyGIN?}
\label{app:polyproof}

In this appendix, we analyze the expressive power of PolyGIN under the standard Weisfeiler--Lehman (WL) framework for message-passing graph neural networks. We follow the WL-based expressivity framework developed for message-passing GNNs and GIN \cite{morris2019weisfeiler, xu2018how}. Our goal is to clarify the representational role of the polynomial constraint introduced in PolyGIN. The analysis follows the same line of reasoning used to characterize the expressive power of Graph Isomorphism Networks (GIN): a message-passing GNN can be as discriminative as the 1-dimensional WL test if its neighborhood aggregation, node update, and graph-level readout functions are injective over the corresponding multisets.

We show that PolyGIN has the same upper bound as standard message-passing GNNs: it cannot distinguish graph pairs that 1-WL fails to distinguish. We then show that, under an explicit injectivity condition on the polynomial transformation and graph-level readout, PolyGIN can match the discriminative power of 1-WL. This conditional statement is important: the polynomial degree guarantee used by APEX for exact Aumann--Shapley quadrature does not by itself imply WL-level expressive power. Rather, the exactness guarantee and the WL expressivity guarantee concern two different properties of the architecture.

\subsection{Preliminaries}
Let $G=(V,E)$ be a graph with node features $x_v$ for $v\in V$. We consider a message-passing architecture in which each node representation is updated by aggregating the representations of its neighbors. PolyGIN follows the GIN aggregation rule but replaces the usual nonlinear transformation module with polynomial and linear operations. Specifically, the $\ell$-th PolyGIN layer computes:
\begin{equation}
a_v^{(\ell)}=(1+\epsilon^{(\ell)})h_v^{(\ell-1)}+\sum_{u\in\mathcal{N}(v)}h_u^{(\ell-1)},
\end{equation}
followed by $h_v^{(\ell)}=P_\ell(a_v^{(\ell)})$, where:
\begin{equation}
P_\ell(z)=W_2^{(\ell)}\sigma_{\mathrm{poly}}\left(\operatorname{Norm}_{\mathrm{poly}}\left(W_1^{(\ell)}z\right)\right),
\end{equation}

\begin{equation}
    \sigma_{\mathrm{poly}}(z)=z+\theta^{(\ell)}\odot z^2,
\qquad
\operatorname{Norm}_{\mathrm{poly}}(z)=s^{(\ell)}\odot z.
\end{equation}

Here $W_1^{(\ell)}$ and $W_2^{(\ell)}$ are learnable linear maps, $s^{(\ell)}$ is a learnable scaling vector, and $\theta^{(\ell)}$ controls the quadratic polynomial activation.

We assume throughout this appendix that the input feature universe is countable and that node degrees are bounded. This is the same setting commonly used in WL-style expressivity analyses of message-passing GNNs\cite{xu2018how,morris2019weisfeiler}. For a finite graph dataset, this assumption is automatically satisfied when node features are categorical, discrete labels, or finite-precision vectors.

We also use the notion of a multiset. A multiset is a set-like object that allows repeated elements. For a node v, the collection $\left\{ h_u^{(\ell-1)} : u\in\mathcal{N}(v) \right\}$ is naturally a multiset, since different neighbors may have identical representations.

The 1-dimensional WL test iteratively refines node colors by hashing the pair consisting of a node’s current color and the multiset of its neighbors’ current colors. Thus, at iteration $\ell$, WL assigns a new color according to:
\begin{equation}
    c_v^{(\ell)}=\operatorname{HASH}\left(
c_v^{(\ell-1)},
\left\{c_u^{(\ell-1)}:u\in\mathcal{N}(v)\right\}
\right),
\end{equation}

where $\operatorname{HASH}$ is injective on all distinct pairs. A message-passing GNN can match the WL refinement process only if its update rule preserves the distinctness of these pairs.

\subsection{PolyGIN is Upper Bounded by 1-WL}
We first establish that PolyGIN cannot be more powerful than the 1-WL test. This upper bound is not specific to the polynomial transformation; it follows from the fact that PolyGIN is a permutation-invariant neighborhood aggregation architecture.

\subsubsection{Proposition 1.}
Let $G_1$ and $G_2$ be two graphs with initial node features. If the 1-WL test assigns identical color multisets to $G_1$ and $G_2$ at every iteration, then any PolyGIN with the same initial node features and any fixed parameters assigns identical graph-level representations to $G_1$ and $G_2$, provided the graph-level readout is permutation invariant. Consequently, PolyGIN is at most as powerful as the 1-WL test in distinguishing non-isomorphic graphs.

\begin{proof}
We prove by induction on the layer index $\ell$ that if two nodes have the same WL color after $\ell$ WL iterations, then PolyGIN assigns them the same representation after $\ell$ layers.

At $\ell$=0, the claim follows from the initialization: nodes with the same initial WL color have the same initial feature representation.

Assume the claim holds for layer $\ell$-1. Consider two nodes $v$ and $v'$ whose WL colors at iteration $\ell$ are the same. By the WL update rule, this means that their previous colors are the same and that their neighbor-color multisets are the same:
\begin{equation}
    c_v^{(\ell-1)}=c_{v'}^{(\ell-1)}
\end{equation}
and
\begin{equation}
    \left\{c_u^{(\ell-1)}:u\in\mathcal{N}(v)\right\}
=
\left\{c_{u'}^{(\ell-1)}:u'\in\mathcal{N}(v')\right\}.
\end{equation}
By the induction hypothesis, nodes with the same WL color at iteration $\ell$-1 have the same PolyGIN representation. Therefore,
\begin{equation}
    h_v^{(\ell-1)}=h_{v'}^{(\ell-1)}
\end{equation}
and
\begin{equation}
    \left\{h_u^{(\ell-1)}:u\in\mathcal{N}(v)\right\}
=
\left\{h_{u'}^{(\ell-1)}:u'\in\mathcal{N}(v')\right\}
\end{equation}
as multisets.

Since PolyGIN uses a permutation-invariant sum aggregation, the aggregated vectors are equal:
\begin{equation}
    a_v^{(\ell)}
=
(1+\epsilon^{(\ell)})h_v^{(\ell-1)}
+
\sum_{u\in\mathcal{N}(v)}h_u^{(\ell-1)}
=
(1+\epsilon^{(\ell)})h_{v'}^{(\ell-1)}
+
\sum_{u'\in\mathcal{N}(v')}h_{u'}^{(\ell-1)}
=
a_{v'}^{(\ell)}.
\end{equation}
Applying the same polynomial transformation $P_\ell$ to both sides gives
\begin{equation}
h_v^{(\ell)}=P_\ell(a_v^{(\ell)})=P_\ell(a_{v'}^{(\ell)})=h_{v'}^{(\ell)}.
\end{equation}
Thus, the claim holds for layer $\ell$.

If 1-WL assigns the same color multiset to $G_1$ and $G_2$ at every iteration, then after every PolyGIN layer the two graphs also have the same multiset of node representations. Any permutation\-invariant graph\-level readout must therefore produce the same graph representation for both graphs. Hence PolyGIN cannot distinguish any pair of graphs that 1-WL cannot distinguish.
\end{proof}

This proposition shows that the polynomial design does not move PolyGIN beyond the standard 1-WL expressivity barrier of message-passing GNNs. In particular, PolyGIN cannot distinguish classical 1-WL-indistinguishable graph pairs such as certain regular graph pairs without additional structural information or higher-order mechanisms.

\subsection{Injectivity Condition for Matching 1-WL}

We now give a sufficient condition under which PolyGIN matches the discriminative power of 1-WL. The central issue is whether the PolyGIN update maps distinct WL refinement pairs to distinct node representations.

For each layer $\ell$, define the WL refinement pair associated with node v as $\mathcal{P}_v^{(\ell)}=\left(h_v^{(\ell-1)},\left\{h_u^{(\ell-1)}:u\in\mathcal{N}(v)\right\}
\right).$
PolyGIN first maps this pair to
\begin{equation}
    S_\ell\left(\mathcal{P}_v^{(\ell)}\right)
=
(1+\epsilon^{(\ell)})h_v^{(\ell-1)}
+
\sum_{u\in\mathcal{N}(v)}h_u^{(\ell-1)},
\end{equation}
and then applies $P_\ell$. Therefore, the effective update function is $U_\ell=P_\ell\circ S_\ell.$

We now state the layer-wise injectivity assumption.

\subsubsection{Assumption 1.}
For every layer $\ell$, the effective PolyGIN update
\begin{equation}
U_\ell:
\left(
h_v^{(\ell-1)},
\left\{h_u^{(\ell-1)}:u\in\mathcal{N}(v)\right\}
\right)
\mapsto
h_v^{(\ell)}
\end{equation}
is injective over all node-centered rooted neighborhoods that can occur in the considered graph family.
Equivalently, for any two distinct pairs $(c,X)\neq(c',X')$, where $c,c'$ are center-node representations and X,X' are bounded-size multisets of neighbor representations, we require
\begin{equation}
    P_\ell\left((1+\epsilon^{(\ell)})c+\sum_{x\in X}x\right)
\neq
P_\ell\left((1+\epsilon^{(\ell)})c'+\sum_{x'\in X'}x'\right).
\end{equation}
This assumption separates two possible sources of collision. First, the GIN-style sum
$(c,X)\mapsto (1+\epsilon)c+\sum_{x\in X}x$ must not collapse distinct center-neighborhood pairs. Second, the polynomial transformation $P_\ell$ must not collapse distinct aggregated vectors that encode different WL refinement pairs.

The first part is the same multiset-injectivity issue addressed in the GIN analysis. For countable feature spaces and bounded multisets, the GIN analysis establishes the existence of injective sum-based multiset encodings \cite{xu2018how}. The second part is specific to PolyGIN: the polynomial block must preserve the distinctness of the relevant aggregated values.

\subsubsection{Proposition 2.}
Let $\mathcal{A}_\ell$ denote the set of aggregated representations that can occur at layer $\ell$. Consider the polynomial transformation $P_\ell$ defined in Eqs.~(18)--(19). Suppose that $W_1^{(\ell)}$ is injective on $\mathcal{A}_\ell$, every entry of $s^{(\ell)}$ is nonzero, and there exists a constant $B_\ell>0$ such that every coordinate of
\begin{equation}
\operatorname{Norm}_{\mathrm{poly}}
\left(
W_1^{(\ell)}a
\right),
\qquad a\in\mathcal{A}_\ell,
\end{equation}
lies in $[-B_\ell,B_\ell]$. If
\begin{equation}
\left|\theta_j^{(\ell)}\right|
<
\frac{1}{2B_\ell}
\end{equation}
for every coordinate $j$, and $W_2^{(\ell)}$ is injective on the representations produced by the polynomial activation, then $P_\ell$ is injective on $\mathcal{A}_\ell$.

\textit{Proof.}
Since $W_1^{(\ell)}$ is injective on $\mathcal{A}_\ell$ and every entry of $s^{(\ell)}$ is nonzero, the linear projection and PolyScaleNorm preserve the distinctness of the aggregated representations. For the $j$-th coordinate, the polynomial activation in Eq.~(19) is the scalar function
\begin{equation}
q_j(t)=t+\theta_j^{(\ell)}t^2.
\end{equation}
For every $t\in[-B_\ell,B_\ell]$, its derivative satisfies
\begin{equation}
q_j'(t)
=
1+2\theta_j^{(\ell)}t
\geq
1-2\left|\theta_j^{(\ell)}\right|B_\ell
>
0.
\end{equation}
Therefore, $q_j$ is strictly increasing and hence injective on $[-B_\ell,B_\ell]$. Since the polynomial activation is applied element-wise, it preserves the distinctness of all representations in the relevant domain. Finally, because $W_2^{(\ell)}$ is injective on the resulting representations, the complete transformation $P_\ell$ is injective on $\mathcal{A}_\ell$.

For a finite graph family or dataset, $\mathcal{A}_\ell$ is finite and therefore bounded, so such a constant $B_\ell$ exists. Moreover, the condition permits nonzero values of $\theta^{(\ell)}$, and thus injectivity does not require removing the quadratic term.

Proposition 2 gives explicit sufficient conditions under which the polynomial transformation $P_\ell$ preserves distinct aggregated representations. Therefore, if the GIN-style aggregation is injective and the conditions of Proposition 2 hold, then the effective update $U_\ell=P_\ell\circ S_\ell$ satisfies the layer-wise injectivity condition in Assumption 1.

\subsection{PolyGIN Can Match 1-WL Under Injective Updates}
\begin{theorem}
    Consider an L-layer PolyGIN. Assume that the input feature universe is countable, neighborhood sizes are bounded, Assumption 1 holds for every layer, and the graph-level readout is injective over multisets of node representations. Then PolyGIN is as powerful as the 1-dimensional Weisfeiler–Lehman test: for any two graphs $G_1$ and $G_2$, if 1-WL distinguishes $G_1$ and $G_2$ within $L$ iterations, then PolyGIN maps $G_1$ and $G_2$ to different graph-level representations.
\end{theorem}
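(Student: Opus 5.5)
The plan is to mirror the GIN argument of Xu et al., using Assumption 1 as the layer-wise injectivity ingredient and the injective readout at the end. The core claim to establish, by induction on $\ell=0,\dots,L$, is a simulation statement that is the converse of the induction in Proposition 1. Consider the disjoint union of $G_1$ and $G_2$, or more precisely all nodes occurring in either graph. For any two such nodes $v,v'$, if $h_v^{(\ell)}=h_{v'}^{(\ell)}$ then $c_v^{(\ell)}=c_{v'}^{(\ell)}$. Equivalently, there is an injective map $\varphi_\ell$ from the PolyGIN representations occurring at layer $\ell$ to WL colors with $c_v^{(\ell)}=\varphi_\ell(h_v^{(\ell)})$. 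Combined with Proposition 1, this shows PolyGIN representations and WL colors induce the same partition of nodes at every layer up to $L$.

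\emph{Base case and inductive step.} At $\ell=0$ the initial WL colors are the initial features, so the identity map works; countability of the feature universe guarantees the colors are well defined. For the inductive step, suppose $h_v^{(\ell)}=h_{v'}^{(\ell)}$. Since $h_v^{(\ell)}=U_\ell(\mathcal{P}_v^{(\ell)})$, Assumption 1 gives $\mathcal{P}_v^{(\ell)}=\mathcal{P}_{v'}^{(\ell)}$. This means $h_v^{(\ell-1)}=h_{v'}^{(\ell-1)}$ and the neighbor multisets of layer-$(\ell-1)$ representations coincide. Here bounded neighborhood size ensures these multisets lie in the domain over which Assumption 1 is stated. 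Applying $\varphi_{\ell-1}$ elementwise, which is legitimate because it is a function, we get $c_v^{(\ell-1)}=c_{v'}^{(\ell-1)}$ and equal neighbor color multisets. The WL update then yields $c_v^{(\ell)}=c_{v'}^{(\ell)}$, so $\varphi_\ell$ is well defined. Injectivity of $\varphi_\ell$ follows from Proposition 1's induction run on the same node set, which gives the reverse implication.

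\emph{Graph level.} Suppose 1-WL distinguishes $G_1$ and $G_2$ within $L$ iterations. Then there is some $\ell\le L$ at which the color multisets $\{\!\{c_v^{(\ell)}\}\!\}$ differ. A standard remark is that once colors refine, later iterations still distinguish, because WL colors at step $L$ determine those at earlier steps. So we may take $\ell=L$, or else use a readout over all layers as in GIN.

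Since $\varphi_L$ is a function, equal PolyGIN multisets $\{\!\{h_v^{(L)}\}\!\}$ would map to equal color multisets. Hence the PolyGIN node-representation multisets of $G_1$ and $G_2$ differ. The injective readout then produces different graph-level representations.

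\emph{Expected obstacle.} The main subtlety is making Assumption 1's domain cover every pair arising in both graphs simultaneously. The layer-$(\ell-1)$ representations must themselves lie in the countable, bounded-multiset family on which injectivity is assumed. I will handle this by noting inductively that the set of reachable representations is countable, as the image of a countable set. A second subtlety is the reduction to $\ell=L$. I will address it by showing $c^{(L)}$ determines $c^{(\ell)}$ for $\ell\le L$, which holds because WL's HASH includes the previous color.
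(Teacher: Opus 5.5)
Your proposal is correct and follows the paper's proof. Both argue by induction that PolyGIN representations refine WL colors across both graphs, with Assumption 1 supplying each step, and then apply the injective readout; you prove ``equal representations imply equal colors'' directly, while the paper proves the contrapositive ``distinct colors yield distinct representations.'' Your explicit reduction from some iteration $\ell\le L$ to the final layer $L$, using that the injective HASH makes $c^{(L)}$ determine $c^{(\ell)}$, fills a step the paper leaves implicit when it applies the readout to the layer-$\ell$ multisets.
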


\begin{proof}
    We prove that, for every layer $\ell$, the PolyGIN representation $h_v^{(\ell)}$ uniquely determines the WL color $c_v^{(\ell)}$, and distinct WL colors are assigned distinct PolyGIN representations.

    At $\ell=0$, the claim follows from the initialization. Nodes with different initial features have different initial WL colors. Under a one-hot or otherwise injective initial feature encoding, different initial WL colors correspond to different initial PolyGIN representations.
    Assume the claim holds for layer $\ell-1$. Consider two nodes $v$ and $v'$. If their WL colors at iteration $\ell$ are different, then by the WL refinement rule,
    \begin{equation}
        \left(
c_v^{(\ell-1)},
\left\{c_u^{(\ell-1)}:u\in\mathcal{N}(v)\right\}
\right)
\neq
\left(
c_{v'}^{(\ell-1)},
\left\{c_{u'}^{(\ell-1)}:u'\in\mathcal{N}(v')\right\}
\right).
    \end{equation}
    By the induction hypothesis, different WL colors at iteration $\ell-1$ correspond to different PolyGIN representations. Therefore, the two WL refinement pairs induce two distinct PolyGIN representation pairs:
    \begin{equation}
        \left(
h_v^{(\ell-1)},
\left\{h_u^{(\ell-1)}:u\in\mathcal{N}(v)\right\}
\right)
\neq
\left(
h_{v'}^{(\ell-1)},
\left\{h_{u'}^{(\ell-1)}:u'\in\mathcal{N}(v')\right\}
\right).
    \end{equation}
    By Assumption 1, the effective PolyGIN update $U_\ell$ is injective over such pairs. Hence,
    \begin{equation}
        h_v^{(\ell)}
=
U_\ell
\left(
h_v^{(\ell-1)},
\left\{h_u^{(\ell-1)}:u\in\mathcal{N}(v)\right\}
\right)
\neq
U_\ell
\left(
h_{v'}^{(\ell-1)},
\left\{h_{u'}^{(\ell-1)}:u'\in\mathcal{N}(v')\right\}
\right)
=
h_{v'}^{(\ell)}.
    \end{equation}
    Thus, distinct WL colors at iteration $\ell$ are mapped to distinct PolyGIN representations. This proves the induction step.
    Now suppose 1-WL distinguishes $G_1$ and $G_2$ within L iterations. Then for some $\ell\le L$, the multisets of WL colors over the nodes of $G_1$ and $G_2$ differ:
    \begin{equation}
        \left\{c_v^{(\ell)}:v\in V(G_1)\right\}
\neq
\left\{c_v^{(\ell)}:v\in V(G_2)\right\}.
    \end{equation}
    By the result above, the corresponding multisets of PolyGIN node representations also differ:
    \begin{equation}
        \left\{h_v^{(\ell)}:v\in V(G_1)\right\}
\neq
\left\{h_v^{(\ell)}:v\in V(G_2)\right\}.
    \end{equation}
    If the graph-level readout is injective over node-representation multisets, then the graph representations of $G_1$ and $G_2$ are different. Therefore, PolyGIN distinguishes every graph pair distinguished by 1-WL within L iterations.
\end{proof}
Combining Proposition 1 and Theorem 2 yields the following corollary.

\textbf{Corollary 1.}

Under the injective-update and injective-readout conditions in Theorem 2, PolyGIN has exactly the same graph discriminative power as the 1-WL test. Without these injectivity conditions, PolyGIN remains upper bounded by 1-WL but may be strictly less powerful.

\subsection{Discussion: Polynomial Exactness Does Not Imply Injectivity}

Theorem 2 should be read as a conditional expressivity result. It does not state that every parameterization of PolyGIN is maximally expressive. Indeed, no such unconditional claim is possible.

The polynomial design of PolyGIN was introduced for a different purpose: to ensure that the scalar model output remains a bounded-degree multivariate polynomial in the input features. This property allows the derivative along the Aumann–Shapley path to have a known finite degree, which in turn enables exact Gauss–Legendre quadrature. The degree argument controls the algebraic form of the model output; it does not automatically control whether two different graph neighborhoods are mapped to two different hidden representations.

A simple example illustrates the distinction. If $W_2^{(\ell)}=0$ in some layer, then $P_\ell$ maps every aggregated vector to the zero vector. The layer is polynomial and satisfies the degree bound, but it is clearly non-injective and destroys all neighborhood information. More generally, if $W_1^{(\ell)}$, $W_2^{(\ell)}$, or the polynomial activation parameters collapse two distinct aggregation values, then the resulting PolyGIN layer cannot simulate the injective hashing step of WL.

Therefore, PolyGIN has two logically separate guarantees:
\begin{itemize}
    \item 1. Algebraic exactness for attribution.
    Because aggregation, linear projection, PolyScaleNorm, and PolyActivation are polynomial or linear operations, the model output has a bounded polynomial degree. This yields an exact finite Gauss–Legendre rule for Aumann–Shapley attribution under the stated assumptions.
    \item 2. WL-level discriminative power under injectivity.
    If the effective layer-wise update and the graph-level readout are injective over the relevant multisets, then PolyGIN matches the expressive power of 1-WL.
\end{itemize}
These two properties are compatible but not equivalent. The first is guaranteed by the polynomial architecture. The second requires the polynomial transformations to preserve the distinctness of WL refinement pairs.

\section{Equivalence Between Completeness Error and Quadrature Error}
\label{app:truncation_error}

This section shows that the completeness error in Eq.~\eqref{eq:completeness_error}, also referred to as the efficiency gap in some attribution literature, is equivalent to the quadrature error of the numerical path integral for Riemann-sum Integrated Gradients. This equivalence justifies using completeness error as an empirical proxy for the truncation error introduced by numerical path integration.

\begin{theorem}
\label{thm:efficiency_gap}
Let $F:\mathbb{R}^n\rightarrow\mathbb{R}$ be continuously differentiable. For an input $X\in\mathbb{R}^n$, a baseline $X'\in\mathbb{R}^n$, and the straight-line path $\gamma(\alpha)=X'+\alpha(X-X')$, the completeness error of an $m$-step Riemann approximation to Integrated Gradients equals the absolute quadrature error of the corresponding path integral.
\end{theorem}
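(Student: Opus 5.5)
The plan is to write the $m$-step Riemann approximation coordinatewise and sum it. Then I compare it with the exact completeness identity, which follows from the fundamental theorem of calculus applied to $F\circ\gamma$.

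\emph{Setting up the approximation.} Fix sample points $\alpha_k\in[0,1]$, $k=1,\dots,m$, with weights $\Delta_k$. For left, right or midpoint rules take $\alpha_k=k/m$ (or the appropriate shift) and $\Delta_k=1/m$. The Riemann IG attribution is
\[
\hat\phi_i=(x_i-x_i')\sum_{k=1}^m \Delta_k\,\partial_i F(\gamma(\alpha_k)).
\]

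\emph{Step 1: sum the attributions.} Summing over $i$ and exchanging the finite sums gives
\[
\sum_i\hat\phi_i=\sum_k \Delta_k\,\nabla F(\gamma(\alpha_k))\cdot(X-X').
\]
By the chain rule, $\nabla F(\gamma(\alpha))\cdot(X-X')=g'(\alpha)$, where $g(\alpha):=F(\gamma(\alpha))$. Hence $\sum_i\hat\phi_i=\sum_k\Delta_k g'(\alpha_k)=:Q_m[g']$, which is precisely the $m$-step Riemann quadrature of the scalar path integrand $g'$.

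\emph{Step 2: exact value.} Because $F$ is $C^1$ and $\gamma$ is affine, $g$ is $C^1$ on $[0,1]$. The fundamental theorem of calculus then gives $\int_0^1 g'(\alpha)\,d\alpha=g(1)-g(0)=F(X)-F(X')$.

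\emph{Step 3: combine.} Substituting into Eq.~\eqref{eq:completeness_error} yields
\[
\Delta_{\mathrm{comp}}=\Bigl|Q_m[g']-\int_0^1 g'(\alpha)\,d\alpha\Bigr|,
\]
which is the absolute quadrature error of the path integral. It is also worth noting that, by linearity of the quadrature rule, this equals $\bigl|\sum_i (x_i-x_i')\,E_m[\partial_iF\circ\gamma]\bigr|$. So it is the aggregate of the coordinatewise errors $E_m$, not their absolute sum.

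\emph{Main obstacle.} The only delicate point is interpretive: stating precisely what "the corresponding path integral" is. I would identify it as $\int_0^1 \nabla F(\gamma(\alpha))\cdot(X-X')\,d\alpha$ approximated by the same rule. With that identification, the exchange of sums and the chain rule make the equality exact rather than a bound. Continuity of $\nabla F$ is needed so that $g'$ is Riemann integrable and FTC applies.
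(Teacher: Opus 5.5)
Your proposal is correct and follows essentially the same argument as the paper. Both sum the Riemann IG attributions and use the chain rule to recognize the total as the Riemann quadrature of $\frac{d}{d\alpha}F(\gamma(\alpha))$, then use the fundamental theorem of calculus to identify the exact integral with $F(X)-F(X')$; the only difference is that you allow general sample points and weights, whereas the paper fixes the right-endpoint rule $\alpha_k=k/m$, $\Delta_k=1/m$.
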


\begin{proof}
The exact Aumann--Shapley attribution for feature $i$ is
\begin{equation}
    \phi_i(X)=(x_i-x_i')\int_0^1 \frac{\partial F(\gamma(\alpha))}{\partial x_i}\,d\alpha.
\end{equation}
Summing over all features and applying the chain rule gives
\begin{align}
    \sum_{i=1}^n\phi_i(X)
    &= \int_0^1 \sum_{i=1}^n \frac{\partial F(\gamma(\alpha))}{\partial x_i}(x_i-x_i')\,d\alpha \\
    &= \int_0^1 \frac{dF(\gamma(\alpha))}{d\alpha}\,d\alpha \\
    &= F(X)-F(X').
\end{align}
An $m$-step right Riemann approximation gives
\begin{equation}
    \sum_{i=1}^n\hat{\phi}_i(X)
    = \frac{1}{m}\sum_{k=1}^{m}\frac{dF(\gamma(k/m))}{d\alpha}.
\end{equation}
The quadrature error of this approximation is
\begin{equation}
    R_m = \frac{1}{m}\sum_{k=1}^{m}\frac{dF(\gamma(k/m))}{d\alpha}
    - \int_0^1\frac{dF(\gamma(\alpha))}{d\alpha}\,d\alpha.
\end{equation}
Substituting the expressions above yields
\begin{equation}
    R_m = \sum_{i=1}^n\hat{\phi}_i(X)-\big(F(X)-F(X')\big).
\end{equation}
Taking the absolute value gives exactly the completeness error:
\begin{equation}
    |R_m|=\left|\sum_{i=1}^n\hat{\phi}_i(X)-\big(F(X)-F(X')\big)\right|.
\end{equation}
Thus the empirical completeness error measures the quadrature error of the numerical path integral.
\end{proof}

\section{Disentangling Architecture and Attribution Algorithm}
\label{app:disentangle}

A possible confounding factor is that the explanation performance may arise from the PolyGIN architecture alone rather than from the explanation mechanism. To control for this factor, we evaluate multiple explainers on the same trained PolyGIN models. In this setting, all methods receive the same underlying predictive model and differ only in how they construct their explanations.

\begin{figure*}[h]
    \centering
    \IfFileExists{figure/F+_poly.pdf}{%
        \includegraphics[width=1.0\textwidth]{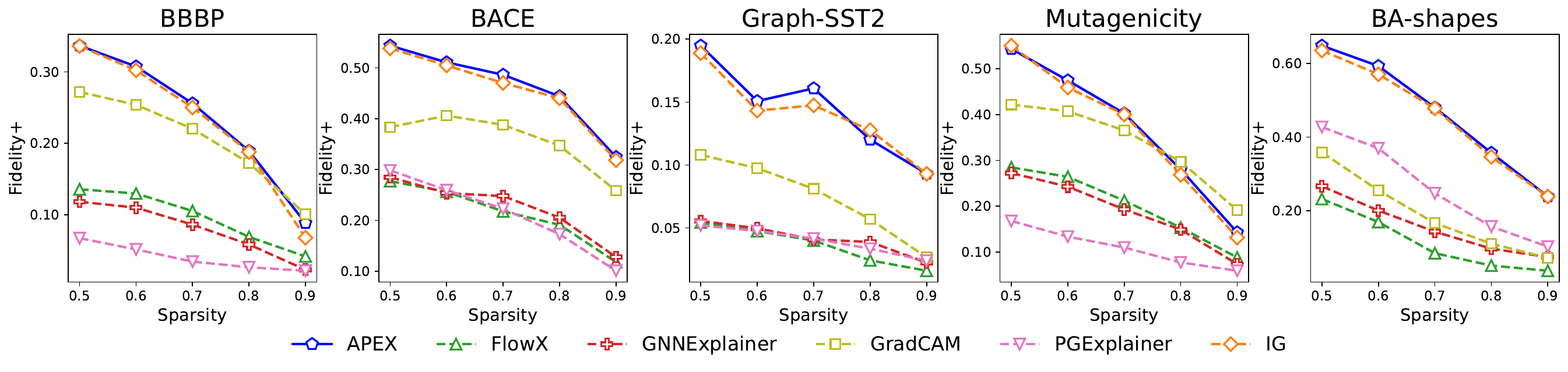}%
    }{%
        \fbox{\parbox{0.92\linewidth}{\centering Placeholder for Fidelity$^+$ comparisons where all explainers are evaluated on PolyGIN.}}%
    }
    \caption{Fidelity$^+$ comparison when all explainers are evaluated on the same PolyGIN models.}
    \label{fig:polygin_fid_plus}
\end{figure*}

\begin{figure*}[h]
    \centering
    \IfFileExists{figure/F-_poly.pdf}{%
        \includegraphics[width=1.0\textwidth]{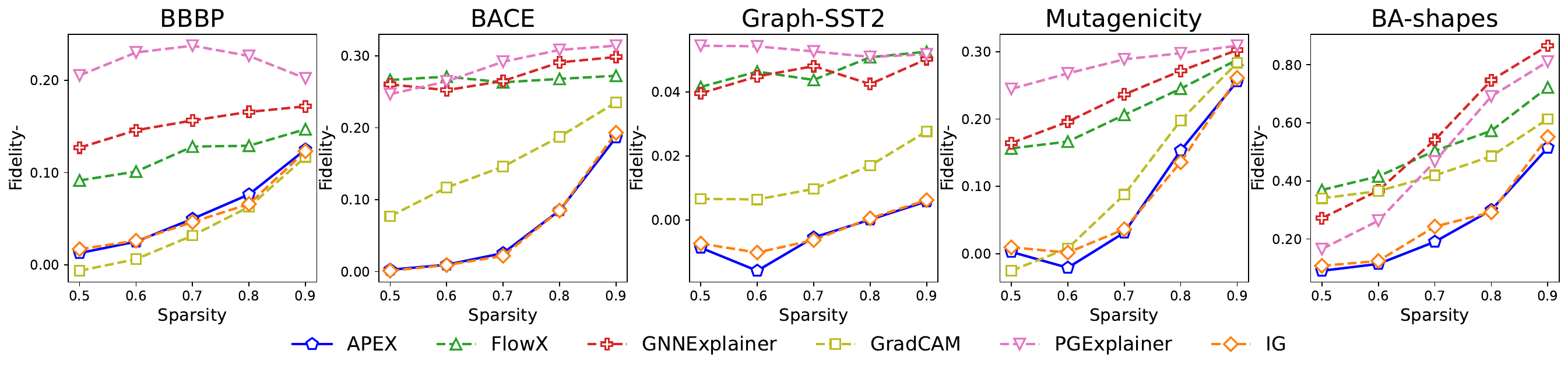}%
    }{%
        \fbox{\parbox{0.92\linewidth}{\centering Placeholder for Fidelity$^-$ comparisons where all explainers are evaluated on PolyGIN.}}%
    }
    \caption{Fidelity$^-$ comparison when all explainers are evaluated on the same PolyGIN models.}
    \label{fig:polygin_fid_minus}
\end{figure*}

Figures~\ref{fig:polygin_fid_plus} and~\ref{fig:polygin_fid_minus} show that numerical Integrated Gradients achieves fidelity results nearly identical to those of APEX when both are applied to PolyGIN. This agreement is theoretically expected because APEX and numerical IG target the same Aumann--Shapley attribution, although they evaluate the corresponding path integral differently. The close empirical agreement therefore provides a consistency check that APEX recovers the intended continuous path attribution rather than defining a different explanation quantity. In contrast, the other explanation methods do not exhibit substantial improvements when applied to the same PolyGIN models, and several methods even show degraded fidelity performance. This indicates that the polynomial architecture alone does not necessarily improve arbitrary post-hoc explainers. Rather, the proposed contribution is the specific co-design of a polynomial predictive architecture and its compatible exact path-attribution procedure.APEX does not introduce an attribution quantity different from Aumann–Shapley attribution. Instead, it jointly specifies an attribution-compatible predictive architecture and an architecture-certified procedure for evaluating the corresponding continuous attribution exactly within a finite deterministic budget. The same-backbone agreement between APEX and numerical IG validates the target attribution recovered by the framework; it does not imply that the framework-level fidelity can be causally assigned to the architecture alone, nor does it treat the architecture and attribution procedure as interchangeable standalone contributions. Accordingly, the results should be interpreted as evidence for the complete PolyGIN–exact-attribution co-design.

\section{Runtime Analysis}

We further compare the runtime required to generate one explanation. State-of-the-art learning-based explainers such as GNNExplainer can be costly because they optimize an instance-specific mask for each instance \cite{ying2019gnnexplainer}. Numerical IG avoids mask optimization but requires many path samples to reduce quadrature error \cite{sundararajan2017axiomatic}. APEX requires only the number of Gauss--Legendre points determined by the degree bound. In our 4-block PolyGIN setting, this corresponds to eight forward-backward evaluations. As shown in Figure~\ref{fig:time_comparison}, APEX is substantially faster than dense numerical integration methods while maintaining exact quadrature under the PolyGIN assumptions. GradCAM-like gradient methods may use fewer backward passes, but their fidelity results in Section~\ref{sec:fidelity} indicate that this lower cost comes with weaker attribution quality under the evaluated settings. Overall, APEX targets a practical trade-off: exact path-integral attribution for the polynomial model with a small, deterministic evaluation budget.

\label{app:time}

\begin{figure}[h]
    \centering
    \IfFileExists{figure/time.pdf}{%
        \includegraphics[width=0.62\linewidth]{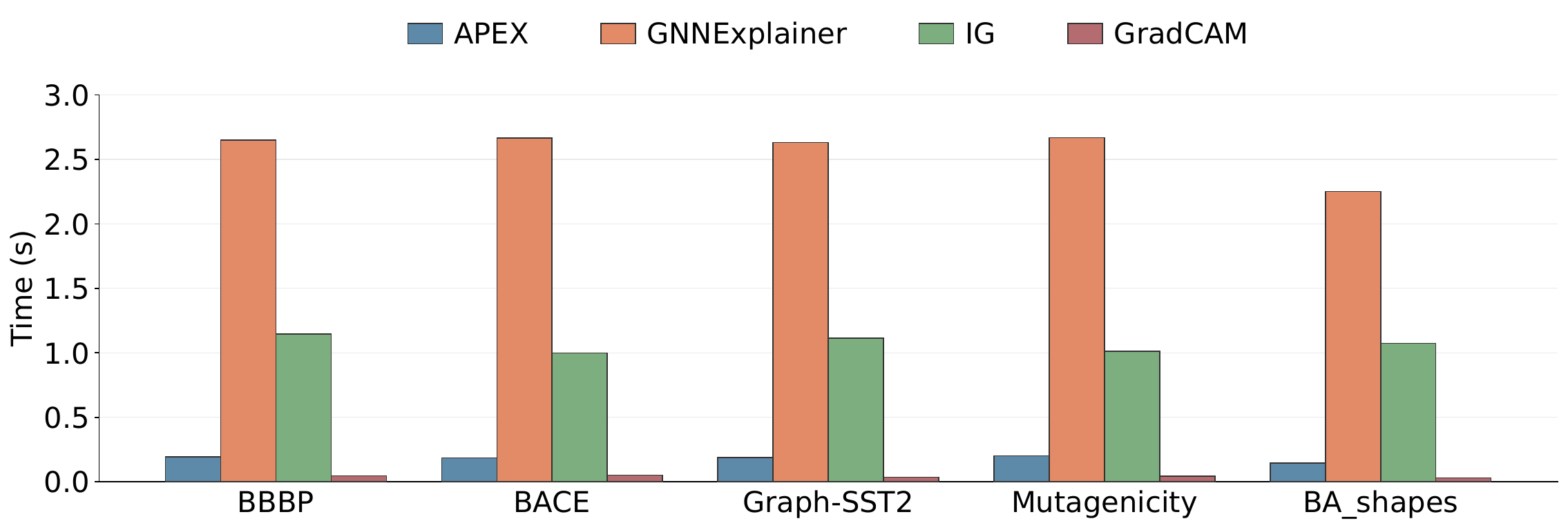}%
    }{%
        \fbox{\parbox{0.58\linewidth}{\centering Placeholder for runtime comparison across explanation methods.}}%
    }
    \caption{Average runtime for generating one graph explanation.}
    \label{fig:time_comparison}
\end{figure}

\clearpage

\begin{figure*}[h]
    \centering
    \includegraphics[width=1.0\linewidth]{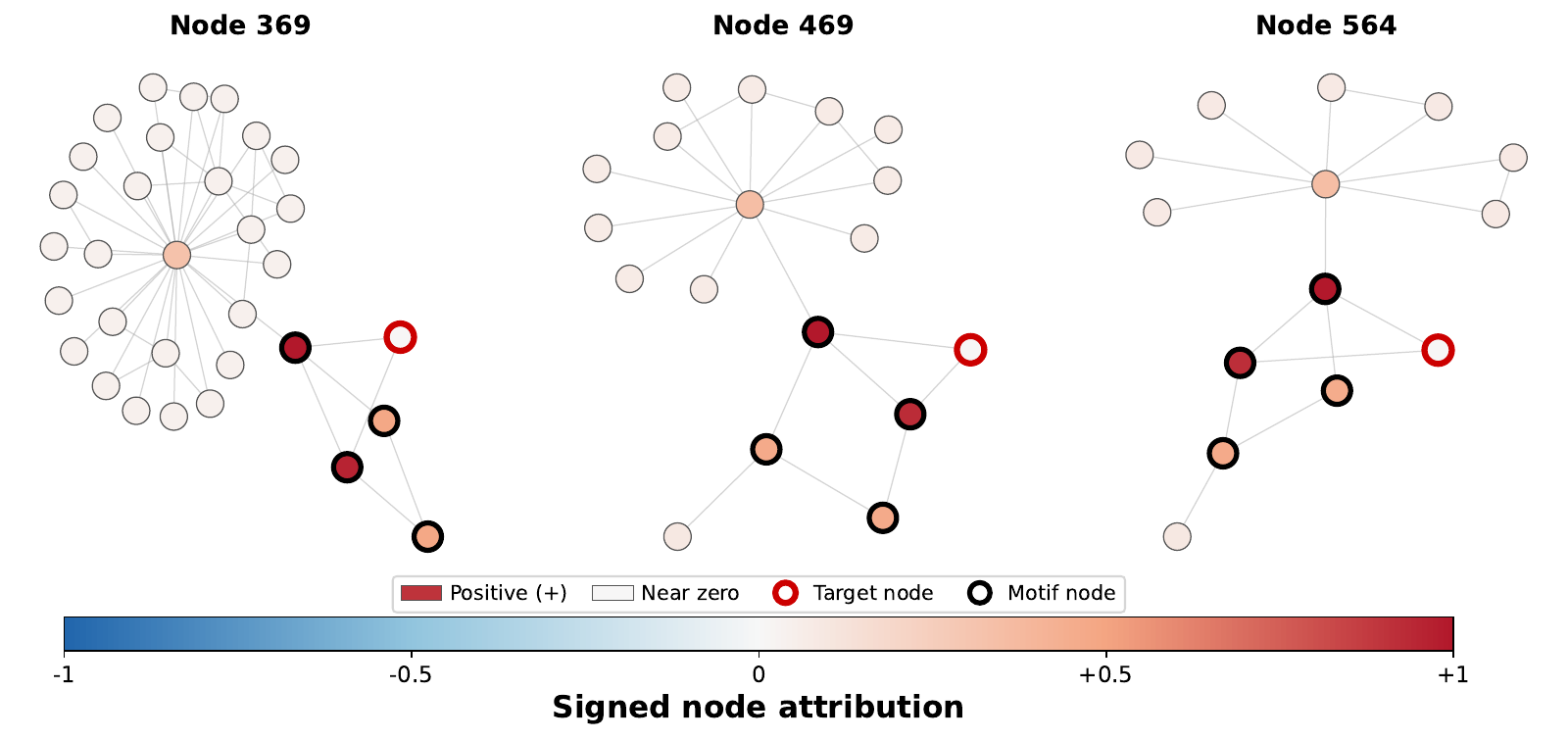}
    \caption{Qualitative visualization of APEX signed node attributions on BA-Shapes. 
The target node is marked by a red boundary, and motif nodes are marked by black boundaries. 
Red indicates positive contribution to the target-class logit, blue indicates negative contribution, and white indicates near-zero contribution. Across different target nodes, APEX consistently assigns positive evidence to the complete house motif while leaving most BA-backbone nodes nearly neutral, matching the ground-truth structural rule of the dataset.
}
    \label{fig:app_ba_shapes}
\end{figure*}

\begin{figure}[h]
    \centering
    \includegraphics[width=1.0\linewidth]{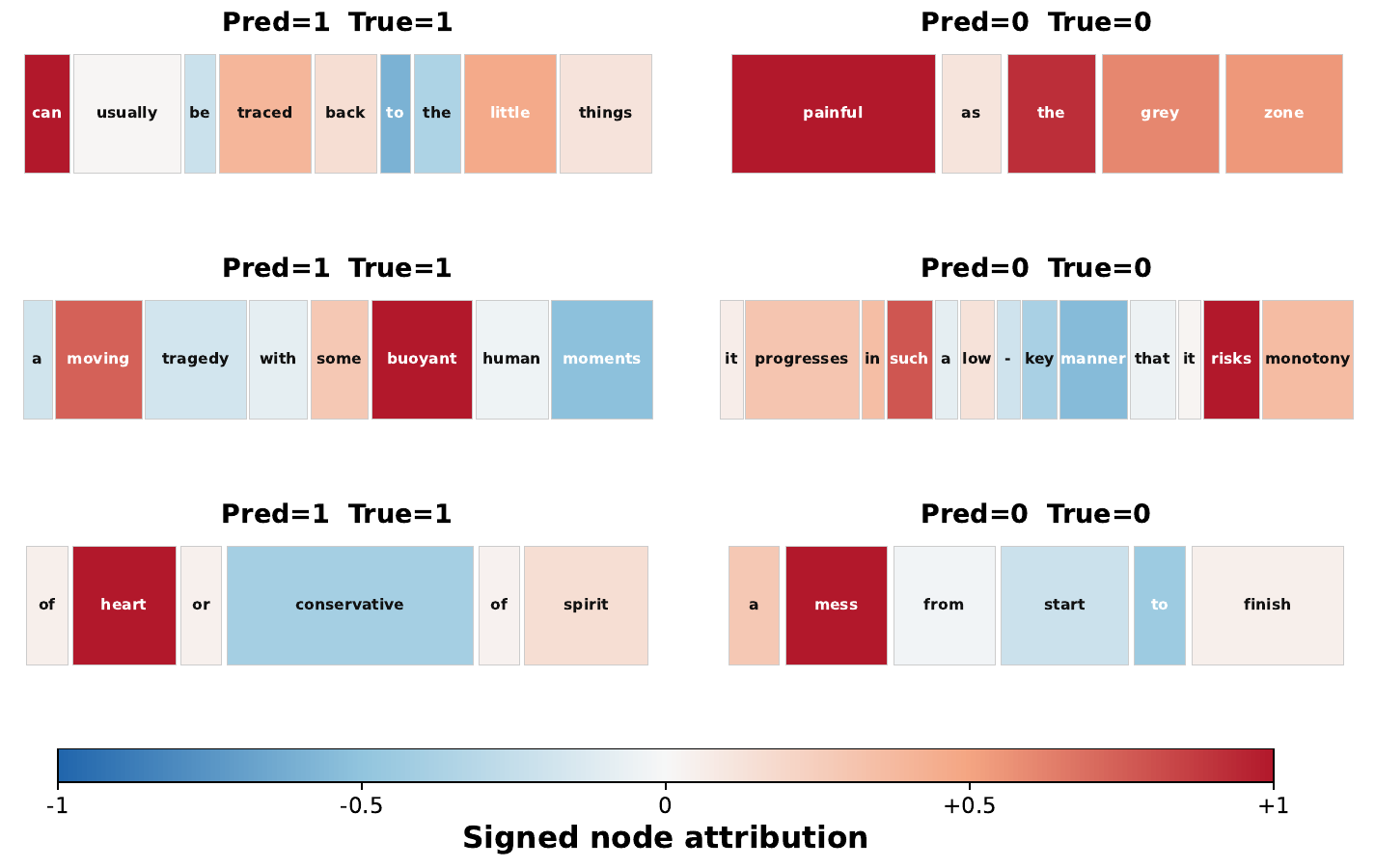}
    \caption{Token-level signed attribution visualizations of APEX on Graph-SST2. All examples shown are correctly classified, with labels 1 and 0 denoting positive and negative sentiment, respectively. Token colors indicate the direction and relative magnitude of each token’s contribution to the logit of the predicted class: positive values support the prediction, whereas negative values suppress it. Attribution scores are normalized to ([-1, 1]).}
    \label{fig:app_sst2}
\end{figure}

\begin{figure}[h]
    \centering
    \includegraphics[width=1.0\linewidth]{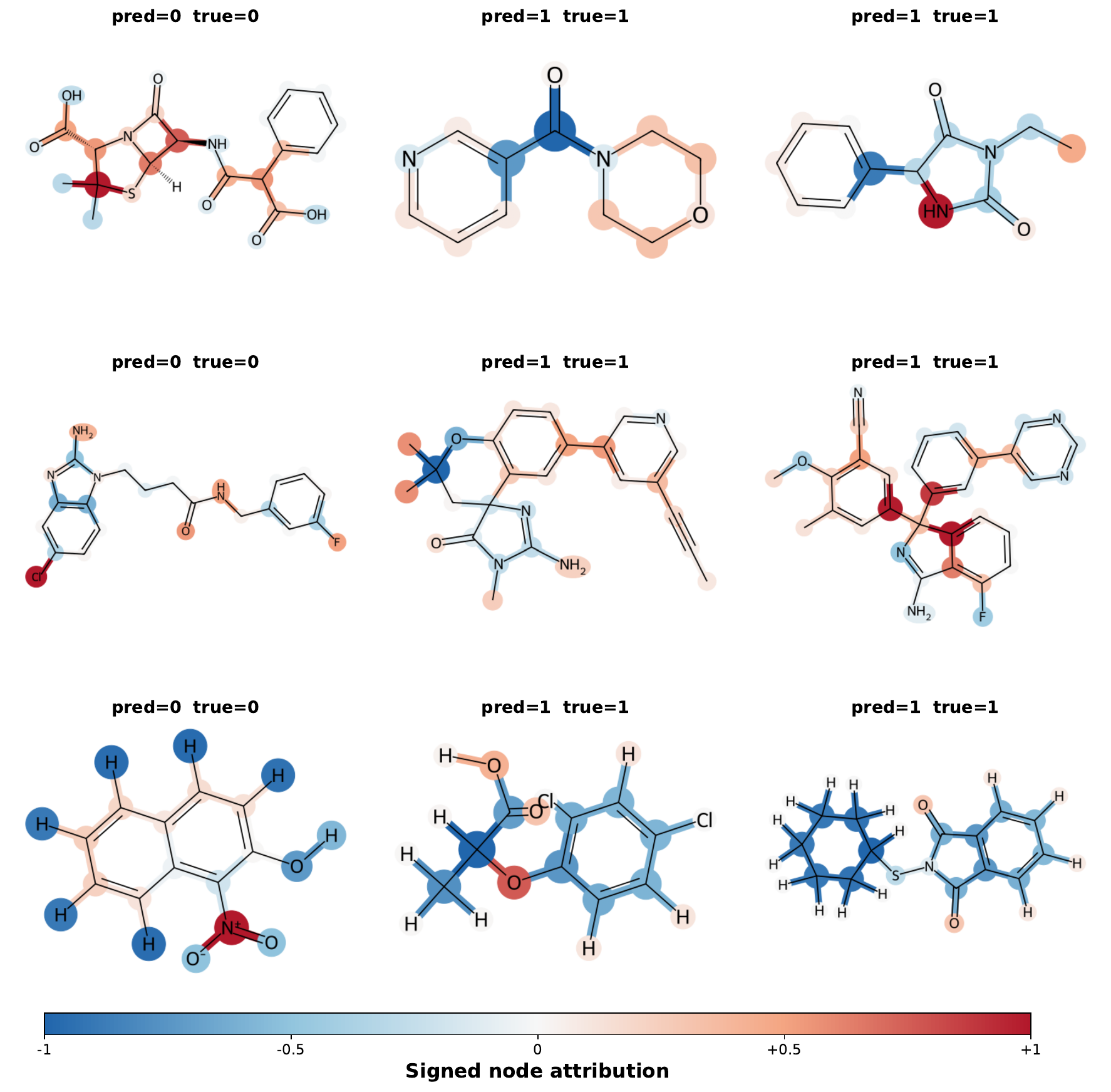}
    \caption{Signed atom-level attributions on molecular property prediction datasets. 
Rows correspond to BBBP, BACE, and Mutagenicity examples, respectively. 
Red atoms support the displayed target class, blue atoms oppose it, and white atoms have near-zero contribution. APEX highlights chemically meaningful evidence, including BBB-relevant polarity and scaffold effects, BACE-related hydrophobic pharmacophore fragments, and mutagenicity-associated structural alerts such as nitro groups. These examples demonstrate that APEX provides direction-aware molecular explanations rather than merely identifying important atoms.}
    \label{fig:app_molecules}
\end{figure}

\section{Additional Qualitative Results}
\label{app:qualitative}

\paragraph{Signed attribution visualization.}
We provide additional qualitative examples to illustrate how APEX explains graph predictions through signed node-level attributions. Many mask- and  subgraph-based explainers used in our comparison produce non-negative selection or importance scores, which typically assign non-negative importance scores to nodes, edges, or subgraphs, APEX directly decomposes the target-class prediction into positive and negative evidence. In our visualization, red nodes have positive attribution and therefore support the displayed target class, while blue nodes have negative attribution and therefore suppress it. White nodes have near-zero contribution. All scores are normalized to $[-1,1]$ for visualization only.

This distinction is crucial: an important node is not necessarily supportive. A token, atom, or structural component may be highly relevant precisely because it argues against the predicted class. APEX’s positive and negative node attributions are grounded in a rigorous Aumann–Shapley axiomatic foundation and constitute a complete signed decomposition of changes in the model output.

\paragraph{BA-Shapes.}
Figure~\ref{fig:app_ba_shapes} presents explanations on BA-Shapes,
a synthetic node-classification benchmark constructed by attaching
five-node house motifs to a Barabási--Albert backbone. Node labels
encode distinct structural roles within the resulting graph. For each
example, we visualize the local ego graph of the target node. The
target node, which is itself part of the house motif, is outlined in
red, while the remaining nodes of the corresponding ground-truth
motif are outlined in black.

Across all three examples, APEX clearly separates the class-defining
house structure from the surrounding BA backbone. The four non-target
nodes of the house motif consistently receive positive attribution,
with the strongest positive evidence concentrated on a subset of
structurally informative motif nodes. In contrast, the target node
itself has a near-zero attribution in these examples. Most nodes in
the surrounding BA backbone also receive negligible attribution,
although the backbone node at the motif attachment point exhibits a
small positive contribution.

These explanations closely match the construction of BA-Shapes:
the prediction is primarily supported by the target node's local
house structure rather than by the large high-degree neighborhood of
the BA backbone. In particular, APEX does not simply select the most
densely connected region. It localizes the positive evidence to the
ground-truth motif while assigning substantially smaller contributions
to structurally irrelevant backbone nodes. The signed visualization
therefore reveals both where the model obtains its evidence and the
direction in which each node affects the target-class score.

\paragraph{Graph-SST2.}
Figure~\ref{fig:app_sst2} shows token-level signed attributions on Graph-SST2. Each sentence is represented as a graph, and APEX assigns a signed contribution to each token with respect to the displayed predicted class. Since all shown examples are correctly classified, the explanations can be inspected directly against human sentiment intuition.

For positive-sentiment predictions, APEX highlights tokens such as ``can'', ``moving'', ``buoyant'', and ``heart'' as positive evidence, while words with negative or contrastive semantics, such as ``tragedy'' and ``conservative'', receive negative attribution. Conversely, for negative-sentiment predictions, tokens such as ``painful'', ``mess'', and ``risks'' are assigned positive attribution because they support the negative class. Other tokens, including neutral syntactic connectors or semantically opposing words, receive weak or negative contributions.

These examples demonstrate a qualitative advantage of signed attribution over ordinary saliency ranking. A non-negative explanation can identify which words are important, but it cannot distinguish whether an important word supports or contradicts the predicted class. APEX separates these two cases explicitly. This is especially useful for sentiment analysis, where sentences often contain mixed evidence, contrastive phrases, or locally negative words inside globally positive sentences.

\paragraph{Molecular graphs.}
Figure~\ref{fig:app_molecules} visualizes signed atom-level attributions on molecular property prediction tasks. We consider examples from BBBP, BACE, and Mutagenicity. The label semantics used in the visualization are summarized in Table~\ref{tab:label_semantics}. The examples demonstrate that APEX resolves a molecular prediction into spatially localized and directionally opposing structural evidence, rather than producing a direction-agnostic ranking of important atoms.

\begin{table}[h]
\centering
\caption{Label semantics used in the molecular visualization examples.}
\vspace{0.5em}
\label{tab:label_semantics}
\begin{tabular}{lll}
\toprule
Dataset & Label $1$ & Label $0$ \\
\midrule
BBBP & BBB-permeable & BBB-impermeable \\
BACE & Active inhibitor & Inactive molecule \\
Mutagenicity & Non-mutagenic & Mutagenic \\
\bottomrule
\end{tabular}
\end{table}

The BBBP examples show that APEX represents blood--brain barrier permeability as a balance of competing local structural contributions. For the BBB-impermeable molecule in the first column, positive evidence is concentrated on selected atoms of the sulfur-containing fused core and the adjacent carbon framework. By contrast, several carbonyl, hydroxyl, and other oxygen-containing sites contribute negatively to the impermeable-class score. The explanation therefore identifies a specific core region as the dominant source of the model's impermeability prediction, rather than assigning uniformly positive importance to all polar atoms in the molecule. For the first BBB-permeable molecule, the carbonyl-centered linker and its adjacent attachment atom provide the strongest negative evidence, whereas the carbon atoms in the right-hand heterocyclic ring contribute positively to the permeable class. The second permeable example exhibits an equally clear competition: the imide-like NH is the strongest positive contributor, and the terminal alkyl substituent also supports permeability, while the two carbonyl regions and part of the aryl--heterocycle connection oppose the prediction. Taken together, these examples show that APEX does not reduce BBB permeability to a single atom type or global polarity rule. Instead, it distinguishes the contributions of chemically different local environments within the same molecule. The negative contributions of carbonyl-rich regions in the two permeable examples are consistent with the permeability cost commonly associated with exposed polarity and hydrogen-bonding capacity, while the strong positive contribution of the NH group in the third molecule illustrates that the learned evidence remains strongly context dependent.

The BACE examples reveal class evidence distributed across coordinated scaffold and substituent regions. In the inactive molecule, the terminal chlorine atom provides the strongest positive evidence for the inactive class. Additional positive contributions arise from the amide-containing linker and the terminal fluorinated region, whereas most of the fused nitrogen-containing heteroaromatic scaffold provides negative evidence. The model therefore distinguishes between opposing contributions within the same molecule rather than treating the presence of aromatic rings, heteroatoms, or halogens as uniformly indicative of activity. For the first active molecule, positive evidence extends along the central phenyl--heteroaryl axis and into the terminal alkyne-containing substituent. In contrast, the oxygenated branched region on the left and several atoms in the lower heterocyclic ring oppose the active-class prediction. In the second active molecule, APEX assigns the dominant positive evidence to the central multiring scaffold and its ring-junction region, while several peripheral nitrogen-containing, fluorinated, and heteroaromatic sites contribute negatively. The two active examples thus exhibit a common organization: activity is supported by a coherent central scaffold, while peripheral functional groups modulate the prediction in both directions. This pattern is consistent with the multi-subsite nature of BACE-1 ligand recognition, for which activity depends on the coordinated arrangement of aromatic, hydrophobic, and polar functionalities rather than on the presence of any single fragment. APEX captures this organization as a signed structure--activity decomposition, identifying both the substructures that drive the active prediction and those that weaken it.

The Mutagenicity examples illustrate three distinct forms of signed molecular evidence. In the mutagenic nitroaromatic molecule, the dominant positive attribution is sharply localized at the nitro nitrogen. The nitro oxygens, the hydroxyl-containing region, and most peripheral hydrogen atoms contribute negatively to the mutagenic-class score. APEX therefore isolates the nitro center as the principal class-supportive site instead of assigning uniformly positive importance to the entire aromatic molecule. This localization recovers a well-established structural alert for mutagenicity and, at the same time, exposes substantial counter-evidence elsewhere in the molecule. The first non-mutagenic molecule exhibits the opposite organization. A single ether oxygen provides the strongest positive evidence for the non-mutagenic class, with a smaller positive contribution near the hydroxyl-containing terminus, whereas most of the surrounding carbon framework contributes negatively. The final non-mutagenic molecule is particularly informative: nearly all displayed heavy-atom attributions are negative or close to zero, despite the correct class-1 prediction. This result directly reflects the reference-based nature of signed attribution. The molecular structure decreases the target-class score relative to the chosen reference, while the final score nevertheless remains sufficient for the model to predict the non-mutagenic class.These examples demonstrate that APEX can distinguish a localized mutagenicity alert, a localized contribution supporting the non-mutagenic class, and a prediction dominated by negative reference-relative evidence. Such cases cannot be represented faithfully by a non-negative atomic importance map, which would obscure both the direction of the evidence and the qualitatively different roles played by the highlighted atoms.

In general, for examples of Mutagenicity, the molecular examples show that APEX produces chemically structured signed explanations: it localizes class-supportive substructures, identifies counter-evidence within the same molecule, and exposes predictions whose interpretation depends critically on the reference output. The observed patterns agree with established structure--property or structure--activity knowledge in several cases, but the attributions describe the evidence used by the trained model rather than experimentally validated molecular mechanisms.

Overall, these qualitative results show that APEX yields explanations that are structurally, linguistically, and chemically interpretable across very different graph domains. The examples should not be interpreted as definitive causal mechanisms; rather, they serve as sanity checks that the learned signed decomposition aligns with known dataset rules, human sentiment intuition, and domain-specific molecular knowledge.

\end{document}